%% file: main.tex
\documentclass{article}

\usepackage{microtype}
\usepackage{graphicx}
\usepackage{subcaption}
\usepackage{booktabs} 

\usepackage{hyperref}
\usepackage{tikz}
\usepackage{xcolor}
\usetikzlibrary{positioning, calc, arrows.meta, shapes.geometric, fit, backgrounds, shadows, 3d}
\definecolor{ulRed}{RGB}{215, 48, 39}    
\definecolor{ulBlue}{RGB}{69, 117, 180}   
\definecolor{ulPurple}{RGB}{118, 42, 131} 
\definecolor{ulGray}{RGB}{240, 240, 240}

 \usepackage[accepted]{icml2026}

\usepackage{amsmath}
\usepackage{amssymb}
\usepackage{mathtools}
\usepackage{amsthm}
\usepackage{bm}
\usepackage{booktabs} 
\usepackage{multirow}
\usepackage{makecell}
\usepackage[table]{xcolor}
\usepackage{algorithm}
\usepackage{algorithmic}

\usepackage[capitalize,noabbrev]{cleveref}
\usepackage{relsize}

\theoremstyle{plain}
\newtheorem{theorem}{Theorem}[section]

\newtheorem{lemma}[theorem]{Lemma}

\theoremstyle{definition}

\theoremstyle{remark}

\newcommand{\sysname}{ZeroUnlearn}
\newcommand{\sysnamegd}{ZeroUnlearn-GD}
\usepackage[textsize=tiny]{todonotes}

\icmltitlerunning{ZeroUnlearn: Few-Shot Knowledge Unlearning in Large Language Models}

\begin{document}

\twocolumn[
  \icmltitle{ZeroUnlearn: Few-Shot Knowledge Unlearning in Large Language Models}



  \icmlsetsymbol{equal}{*}
  \begin{icmlauthorlist}
    \icmlauthor{Yujie Lin}{equal,yyy,lab}
    \icmlauthor{Chengyi Yang}{equal,lab,film}
    \icmlauthor{Zhishang Xiang}{lab,ins}
    \icmlauthor{Yiping Song}{syp}
    \icmlauthor{Jinsong Su}{yyy,lab}

  \end{icmlauthorlist}

  \icmlaffiliation{yyy}{School of Informatics, Xiamen University, China}
  \icmlaffiliation{film}{School of Film, Xiamen University, China}
  \icmlaffiliation{ins}{
Institute of Artificial Intelligence,  Xiamen University, China}
\icmlaffiliation{syp}{National University of Defense Technology, China}
\icmlaffiliation{lab}{Key Laboratory of Digital Protection and Intelligent Processing of Intangible
 Cultural Heritage of Fujian and Taiwan (Xiamen University), Ministry of Culture and Tourism, China}
  \icmlcorrespondingauthor{Jinsong Su}{jssu@xmu.edu.cn}

  \icmlkeywords{Machine Learning, ICML}

  \vskip 0.3in
]



 \printAffiliationsAndNotice{\icmlEqualContribution}

\begin{abstract}
Large language models inevitably retain sensitive information, defined as inputs that may induce harmful generations, due to training on massive web corpora, raising concerns for privacy and safety. Existing machine unlearning methods primarily rely on retraining or aggressive fine-tuning, which are either computationally expensive or prone to degrading related knowledge and overall model utility. In this work, we reformulate machine unlearning as a precise knowledge re-mapping problem via model editing. We propose \sysname{}, a few-shot unlearning framework. It overwrites sensitive inputs by mapping them to a neutral target state and removing their original representations. 
\sysname{} enforces representational orthogonality through a multiplicative parameter update with a closed-form solution, enabling efficient and targeted unlearning. We further extend \sysname{} to a gradient-based variant for multi-sample unlearning. Experiments demonstrate that our approach can outperform existing baselines while preserving general model utility.
Our code is available at the github: \url{https://github.com/XMUDeepLIT/ZeroUnlearn}.
\end{abstract}

\input{sec/intro}
\input{sec/related_works}

\input{sec/background}

\input{sec/method}

\input{sec/experiments}
\input{sec/conclusion}

\section*{Impact Statement}
This paper introduces a novel framework for precise knowledge erasure in large language models. Our work has the potential to significantly improve AI safety by enabling the removal of toxic content, hallucinations, and private data without retraining.  \sysname{} contributes to the development of more trustworthy and legally compliant AI systems. We do not foresee immediate negative societal consequences that must be specifically highlighted here.

\section*{Acknowledgements}
The project was supported by National Key\&D Program of China (No. 2022ZD0160501), Natural Science Foundation of Fujian Province of China (No. 2024J011001), and the Public Technology Service
Platform Project of Xiamen (No.3502Z20231043).
Chengyi Yang is the project leader.
We also thank the reviewers for their insightful comments.
\bibliography{main}
\bibliographystyle{icml2026}
\newpage
\appendix
\onecolumn

\input{sec/appendix}

\end{document}

%% file: sec/intro.tex
\section{Introduction}
Recently, large language models (LLMs)~\cite{grattafiori2024llama,yang2025qwen3,achiam2023gpt} have demonstrated remarkable performance across a wide range of information-intensive tasks. Since these models are often trained on extensive web corpora, they inevitably acquire and retain biased~\cite{WANG2025104258,lin2026bidirectional,lin2024fade,lin2023towards,shao2024supervised}, private~\cite{das2025security,pan2020privacy}, or outdated information~\cite{nasr2023scalable,wen2023unveiling,eldan2023whosharrypotterapproximate}. Thus, the ability to selectively remove specific knowledge, known as \emph{machine unlearning}~\cite{bourtoule2021machine}, has become a critical requirement for the responsible deployment of LLMs, particularly in scenarios demanding compliance with privacy regulations, content moderation, or factual updates.

Existing approaches to unlearning in LLMs are often data-driven retraining ones, 
which can be mainly categorized into two primary paradigms~\cite{yao2024machine,bhaila2025soft}. 
The first represents the naive yet exact solution: retraining the model from scratch on the remaining dataset after excluding the specific knowledge to be forgotten~\cite{yao2024machine}. However, given the huge parameter scale of modern LLMs and the magnitude of pretraining corpora, the computational cost of full retraining is typically prohibitive, rendering it practically infeasible for real-world applications.
The second paradigm therefore focuses on efficient fine-tuning, typically by applying penalty-based objectives (e.g., gradient ascent) directly on the forget set~\cite{jang2023knowledge,jia2026objecthallucinationfreereinforcementunlearning}. While computationally more feasible, this aggressive optimization often leads to undesirable side effects, such as the unintended erosion of semantically related yet benign knowledge (neighborhood knowledge) and the degradation of the model's core linguistic capabilities. Although subsequent studies have attempted to mitigate these issues through various regularization techniques or preservation constraints~\cite{yao2024large}, achieving an effective balance among unlearning efficacy, protection of related knowledge, and preservation of general model utility continues to pose a significant and unresolved challenge.

In contrast to these traditional optimization paradigms, \text{knowledge editing}~\cite{mitchell2021fast,meng2022locating,meng2022mass,fang2024alphaedit} offers a more precise alternative. It operates by selectively  modifying only a specific subset of parameters to update the model's factual knowledge. 
This targeted mechanism motivates a novel hypothesis: \textit{is it possible to repurpose knowledge editing to achieve unlearning by re-mapping the targeted knowledge to a predefined safe state? }
Specifically, rather than destructively perturbing the model weights, we propose to overwrite sensitive information that could trigger harmful generations by assigning it a new label. Consequently, when encountering such input, the edited model will be directed to produce a neutral token such as ``\texttt{<EOS>}''.

To this end, we introduce \textbf{\sysname{}}, a framework specifically designed for the \text{few-shot} knowledge unlearning. 
Distinct from conventional knowledge editing techniques that primarily focus on establishing a new input–output mapping, \sysname{} enforces a dual objective: it not only redirects sensitive inputs to a designated target token but also explicitly minimizes the representational similarity between the updated state and the original knowledge.
More concretely, we ensure that the unlearning process fundamentally orthogonalizes the edited representations with respect to their original sensitive embeddings, thereby achieving more complete erasure. To achieve this, we devise a novel multiplicative knowledge editing framework and mathematically derive a closed-form solution for the optimal transformation matrix. 
Furthermore, we extend our framework to multi-sample unlearning by introducing \sysnamegd{}, a gradient-based variant that surpasses existing editing baselines in unlearning efficacy. 
In summary, our main contributions are as follows: 
\begin{itemize}
    \item We propose \sysname{}, a pioneering framework that reframes machine unlearning as a precise knowledge remapping task through a novel multiplicative parameter update mechanism. By projecting sensitive inputs into a null space orthogonal to their original representations, our framework ensures thorough knowledge removal while preserving the model’s general utility.

    \item We provide a theoretical derivation for the unlearning objective, yielding a closed-form solution that enables efficient one-step optimization tailored to few-shot scenarios. Additionally, we extend this formulation to multi-sample settings via \sysnamegd{}, a gradient-based variant designed to handle batch unlearning.

    \item We conduct experiments across widely-used models and benchmarks, demonstrating that \sysname{} and its variant significantly outperform baselines while maintaining a favorable balance between unlearning efficacy and general model utility.
\end{itemize}

%% file: sec/related_works.tex
\section{Related work}

\textbf{Knowledge Editing}  aims to modify specific factual knowledge within LLMs with high precision and locality. One line of methods utilizes external memory or auxiliary modules to intercept and override the model's original predictions for targeted queries, effectively ``patching'' the model without altering its core weights~\cite{mitchell2022memory,huang2023transformer,hartvigsen2023aging}.
Another line of research focuses on direct parameter optimization or weight manipulation. These methods typically identify specific layers responsible for storing particular knowledge and apply closed-form updates to modify factual associations~\cite{meng2022locating,meng2022mass}.

\textbf{Model Unlearning} seeks to comply with data-protection regulations by efficiently removing the influence of specific training samples without costly retraining procedures~\cite{guo2019certified,bourtoule2021machine,sekhari2021remember}. A prominent line of work formulates unlearning as an optimization problem, often applying gradient ascent on unlearning samples to suppress undesired outputs or behaviors~\cite{jang2023knowledge,yao2024machine,maini2024tofu}. Another approach treats unlearning as a supervised fine-tuning task by relabeling or rewriting the target outputs for data to be forgotten~\cite{eldan2023whosharrypotterapproximate,jia2024soul,bhaila2025soft}. Through gradient descent toward alternative or neutral responses, these methods aim to overwrite unwanted knowledge while preserving the model’s overall utility.

%% file: sec/background.tex
\section{Background}
\subsection{Unlearning for Large Language Models}

In the context of LLMs, we define the unlearning task as the targeted removal of specific factual associations or sensitive data. Let $\mathcal{D}_{f}$=$\{(x_i, y_i)\}_{i=1}^{|\mathcal{D}_{f}|}$ denote the \textit{forget set}, which contains information that must be neutralized due to privacy, safety, or legal requirements. Given a pre-trained model $f_{\theta}$ parameterized by $\theta$, the objective of machine unlearning is to derive updated parameters $\theta^{\prime}$ such that $f_{\theta^{\prime}}$ no longer exhibits knowledge of $\mathcal{D}_{f}$.
Unlike traditional retraining-based paradigms, we focus on a data-efficient setting where only the forget set $\mathcal{D}_f$ is available during the unlearning process. Formally, an unlearning algorithm $\mathcal{U}$ serves as a transformation:
\begin{equation}
    \theta^{\prime} = \mathcal{U}(\theta, \mathcal{D}_{f}).
\end{equation}
This update is governed by two primary desiderata:
 \textbf{(i)} Forget Efficacy: The influence of $\mathcal{D}_{f}$ on the model's output must be effectively neutralized. This is typically achieved by re-mapping sensitive inputs to non-informative targets (e.g., \texttt{<EOS>} tokens) or maximizing the loss on $\mathcal{D}_{f}$ to prevent the model from generating the original sensitive responses.
 \textbf{(ii)} Utility Preservation: Since no explicit retain set is provided, the update $\Delta \theta = \theta^{\prime} - \theta$ must not cause the catastrophic forgetting of the model's general capabilities. Thus, $f_{\theta^{\prime}}$ maintains performance comparable to $f_{\theta}$ on general linguistic tasks and unrelated factual knowledge.
Achieving both objectives without access to the original training data remains a significant challenge. 

\subsection{Autoregressive Large Language Models}
\label{sec: llms}
Autoregressive LLMs acquire and store knowledge through next-token prediction.
For each layer \( l \in \{1, \ldots, L\} \), the hidden representation of a token is computed via residual connections over a causal self-attention module and a feed-forward network (MLP). Let \( \mathbf{h}^{(l)} \) and \( \mathbf{h}^{(l-1)} \) denote the hidden states of a token $x$ at layers \( l \) and \( l-1 \), respectively. The forward propagation at layer \( l \) is defined as
{
\small
\begin{equation}
\begin{aligned}
\mathbf{m}^{(l)} &= \mathbf W_{\text{down}} \ \sigma\left(\mathbf W_{\text{up}}\ \mathrm{Norm}\left(\mathbf{h}^{(l-1)} + \mathbf{a}^{(l)}\right)\right), \\
\mathbf{h}^{(l)} &= \mathbf{h}^{(l-1)} + \mathbf{a}^{(l)} + \mathbf{m}^{(l)} .
\end{aligned}
\end{equation}
}
Here, \( \mathbf{a}^{(l)} \) denotes the output of the causal self-attention mechanism, \( \mathbf{m}^{(l)} \) denotes the output of the MLP module,  $\mathbf W_{\text{up}}$ and $\mathbf W_{\text{down}}$ are the weight matrices of the FFN layers,  $\sigma(\cdot)$ is the non-linear activation function, and 
$\mathrm{Norm}(\cdot)$ denotes the layer normalization. The residual formulation facilitates stable optimization and effective information propagation across layers.

Following most prior work on knowledge editing, in this work we formulate the knowledge stored in LLMs as (subject $s$, relation $r$, object $o$) triples, for example, ($s$ = `` \textit{Paris} ", $r$ = `` \textit{is the capital of} ", $o$ = `` \textit{France} "). 
For notational simplicity, we omit the superscript of $\mathbf{m}^l$ and denote it as $\mathbf{m}$, and $\mathbf h$ denotes the hidden state formed as $\mathbf m$ plus the residual information.
Let 
$
\mathbf{k} = \sigma\!\left(\mathbf{W}_{\text{in}}\, \mathrm{Norm}\!\left(\mathbf{h}^{(l-1)} + \mathbf{a}^{(l)}\right)\right)
$,
$\mathbf{W}_{\text{down}}$ maps $\mathbf{k}$ to $\mathbf{m}$; therefore, effective unlearning can be achieved by editing $\mathbf{W}_{\text{down}}$. 
In this setting, the knowledge of the model is stored in such $(\mathbf k, \mathbf m)$ pairs.
Throughout the paper, we use $\mathbf{W}$ to denote $\mathbf{W}_{\text{down}}$.

%% file: sec/method.tex
\section{Methodology}
\begin{figure}[!t]
    \centering
\includegraphics[width=\linewidth]{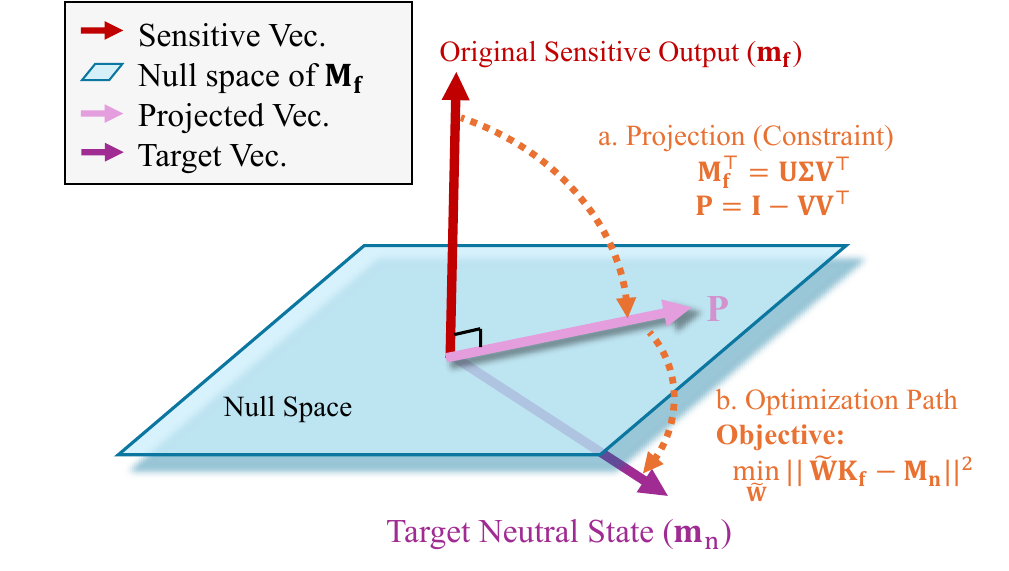}
    \caption{Geometric illustration of \sysname{}. The original sensitive output $\mathbf{m}_\textbf{f}$ is first projected onto the null space via the projection matrix $\mathbf{P}$ (Step a). Subsequently, the optimization process aligns the projected representation with the target neutral state $\mathbf{m}_n$ (Step b) to achieve precise knowledge erasure.}
    \label{fig:zr}
    \vspace{-7mm}
\end{figure}

We introduce \sysname{}, a framework for  LLM unlearning via \text{null-space projection}. 
As shown in Figure~\ref{fig:zr}, we induce target removal by isolating knowledge erasure within the null space. The framework provides a \text{closed-form solution} for efficient few-shot unlearning and a \text{gradient-based scheme} for multi-sample scenarios, ensuring both precision and model stability without performance degradation.

\subsection{Unlearning via Model Editing}
According to the formulation in Section~\ref{sec: llms}, the model's original knowledge is represented by $(\mathbf k_0, \mathbf m_0)$, while the knowledge from the forget set are represented by $(\mathbf k_f, \mathbf m_f)$. 
We stack such vector pairs into the corresponding matrices: $(\mathbf K_0, \mathbf M_0)$ and $(\mathbf K_f, \mathbf M_f)$. By leveraging model editing to update the mapping between $\mathbf{K}_f$ and $\mathbf{M}_f$, 
we aim to align $\mathbf{K}_f$ with a new knowledge vector $\mathbf{M}_n$. 
For instance, by setting the representation of the ``\texttt{<EOS>}'' token as the target 
$\mathbf{M}_n$, we can effectively suppress the probability of generating $\mathbf{M}_f$ 
given the input $\mathbf{K}_f$. Formally, this objective can be formulated as a constrained optimization problem:
{
\small
\begin{equation}
\min_{\mathbf{\tilde{W}}} \| \mathbf{\tilde{W}} \mathbf{K}_f - \mathbf{M}_n \|^2 \quad \text{s.t.} \quad \mathbf{\tilde{W}} \mathbf{K}_0 = \mathbf{M}_0,
\end{equation}
}
where $\mathbf{\tilde{W}}$ represents the updated weight matrix of the target feed-forward layer. The objective function ensures that the input keys from the forget set are re-mapped to the nullifying target $\mathbf{M}_n$, while the equality constraint preserves the model's performance on the general knowledge base.
In practice, we operationalize this constraint by sampling $10^5$ entries from Wikidata\footnote{We utilize the \texttt{20220301.en} subset from \url{https://huggingface.co/datasets/wikimedia/wikipedia}.} to construct $\mathbf{M}_0$ as a representative subset of general knowledge.

\subsection{Objective of \sysname{}}
To ensure both forgetting quality and general utility during unlearning, we design a new optimization objective involving the following three terms:
{
\small
\begin{equation}
\label{eq1}
\begin{aligned}
\mathbf{\tilde{W}}^*
=\operatorname*{argmin}_{\mathbf{\tilde{W}}}\;
&\underbrace{\| \mathbf M_f^{\top}(\mathbf{\tilde{W}}\mathbf K_{f}) \|^2}_{\textbf{(i) Zero Term}}+
\underbrace{\| \mathbf{\tilde{W}}\mathbf K_{f}-\mathbf M_{n}\|^2}_{\textbf{(ii) Forget Term}}\\
&+
\underbrace{\| \mathbf{\tilde{W}}\mathbf K_{0}-\mathbf M_{0}\|^2}_{\textbf{(iii) Utility Term}} .
\end{aligned}
\end{equation}
}
The zero term encourages the updated MLP outputs $\mathbf{\tilde{W}}\mathbf K_{f}$ to be as orthogonal as possible to $\mathbf M_f$, which encodes the original knowledge of the forget set. Please note that when $\| \mathbf M_f^{\top}(\mathbf{\tilde{W}}\mathbf K_{f})\|^2 = \mathbf 0$, the similarity between $\mathbf{\tilde{W}}\mathbf K_{f}$ and $\mathbf M_f$ is zero. The forget term  aims to explicitly redirect the associative mapping of the forget set. By aligning the input keys $\mathbf{K}_f$ with a neutral target $\mathbf{M}_n$ (e.g., the representation of the ``\texttt{<EOS>}'' token), we actively guide the model to overwrite the undesired knowledge with a non-informative or terminal signal. This term ensures that the model does not merely suppress the original output but learns to map the sensitive inputs to a predefined ``null'' state, thereby effectively neutralizing the influence of the forget set.

Furthermore, the utility term $\| \mathbf{\tilde{W}}\mathbf{K}_{0}-\mathbf{M}_{0}\|^2$ serves as a fidelity constraint to preserve the model's general capabilities. It encourages the updated weight matrix $\mathbf{\tilde{W}}$ to maintain the original input-output associations for the remaining knowledge base $(\mathbf{K}_0, \mathbf{M}_0)$. By minimizing this term, we ensure that the unlearning process remains precise, modifying only the targeted factual associations while preventing catastrophic forgetting or degradation of the model's fundamental linguistic proficiency.

\subsection{\sysname{}: Null-Space Constrained Unlearning}
To both simplify the objective of \sysname{} and alleviate the trade-off issue, we introduce a new editing paradigm. 
In contrast to traditional methods that apply an additive perturbation to the original parameter matrix $\mathbf W$, 
we explore a multiplicative formulation by directly left-multiplying $\mathbf W$ with a projection matrix $\mathbf D$, 
namely $\tilde{\mathbf W} = \mathbf D \mathbf W$.
At this point, the original problem (Eq.~\ref{eq1}) can be reformulated as
{
\small
\begin{equation}
\begin{aligned}
\mathbf{D}^*
=\operatorname*{argmin}_{\mathbf{D}}\;
&\| \mathbf M_f^{\top}(\mathbf{D}\mathbf{W}\mathbf K_{f}) \|^2+
\| \mathbf{D}\mathbf{W}\mathbf K_{f}-\mathbf M_{n}\|^2\\
&+
\| \mathbf{D}\mathbf{W}\mathbf K_{0}-\mathbf M_{0}\|^2.
\end{aligned}
\end{equation}
}
To ensure the zero term is identically zero, we aim to find an appropriate $\mathbf{D}$ in the right null space of $\mathbf{M}_f^{\top}$ such that
$\mathbf{M}_f^{\top}\mathbf{D} = \mathbf{0}$. Specifically, we perform the singular value decomposition (SVD) of $\mathbf{M}_f^{\top}$, yielding
$\mathbf{M}_f^{\top} = \mathbf{U}\boldsymbol{\Sigma}\mathbf{V}^{\top}$.
Then we define the orthogonal projection matrix as
$\mathbf{P} = \mathbf{I} - \mathbf{V}\mathbf{V}^{\top}$.
At this point, $\mathbf{P}$ lies in the right null space of $\mathbf{M}_f^{\top}$, i.e.,
$
\mathbf{M}_f^{\top}\mathbf{P} = \mathbf{0}$.
Therefore, by reparameterizing $\mathbf{D}$ as $\mathbf{D} = \mathbf{P}\tilde{\mathbf{D}}$, it follows that $\mathbf{D}$ also lies in the right null space of $\mathbf{M}_f^{\top}$. The updated optimization objective can be expressed as
{
\small
\begin{equation}
\label{eq: muti-edit}
\begin{aligned}
\operatorname*{min}_{\mathbf{\tilde{D}}} 
 \| \mathbf P\mathbf{\tilde{D}} \mathbf W \mathbf K_{f}-\mathbf M_{n} \|^2
+ \| \mathbf P\mathbf{\tilde{D}} \mathbf W \mathbf K_{0}-\mathbf M_{0}  \|^2.
\end{aligned}
\end{equation}
}
In this manner, we elegantly avoid the trade-off between catastrophic forgetting and model capacity. Finally, in practical applications, we introduce an additional regularization term to ensure stable convergence of the model:
{
\small
\begin{equation}
\begin{aligned}
\mathbf{\tilde{D}}^*=\operatorname*{argmin}_{\mathbf{\tilde{D}}}& 
 \| \mathbf P\mathbf{\tilde{D}} \mathbf W \mathbf K_{f}-\mathbf M_{n} \|^2
+ \| \mathbf P\mathbf{\tilde{D}} \mathbf W \mathbf K_{0}-\mathbf M_{0}  \|^2\\ &+ \| \mathbf{\tilde{D}} \mathbf W-\mathbf W  \|^2.
\end{aligned}
\label{eq:final objective}
\end{equation}
}
\begin{lemma}[Close-form solution for \sysname{}]
\label{lemma2}
The final optimization objective shown in Objective~\ref{eq:final objective} admits a closed-form solution and can be expressed as follows:
{
\small
\begin{equation}
    \begin{aligned}
        \tilde{\mathbf{D}}^* = \mathbf{P}
( \mathbf{A}
     + \mathbf{W} )\mathbf{W}^{\top}
( \mathbf{W}
       ( \mathbf{B}
       + \mathbf{I} )
       \mathbf{W}^{\top}
)^{-1},
    \end{aligned}
\end{equation}
}
where $\mathbf{A}=\mathbf{M}_n \mathbf{K}_f^{\top}
     + \mathbf{M}_0 \mathbf{K}_0^{\top}$ and $\mathbf{B}=\mathbf{K}_f \mathbf{K}_f^{\top}
       + \mathbf{K}_0 \mathbf{K}_0^{\top}$.
Because $\mathbf{P}$ is an orthogonal projector satisfying $\mathbf{P}^2=\mathbf{P}$, we have $\mathbf{D}^*=\mathbf{P}\tilde{\mathbf{D}}^*=\tilde{\mathbf{D}}^*$.

\end{lemma}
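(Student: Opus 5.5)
The plan is a first-order optimality argument, organized around the orthogonal decomposition induced by $\mathbf{P}$. The objective in Objective~\ref{eq:final objective} is a convex quadratic in $\tilde{\mathbf{D}}$, so any stationary point is a global minimizer. I would not differentiate blindly, however. I would first split $\tilde{\mathbf{D}} = \mathbf{P}\tilde{\mathbf{D}} + (\mathbf{I}-\mathbf{P})\tilde{\mathbf{D}}$, using that $\mathbf{P}=\mathbf{I}-\mathbf{V}\mathbf{V}^{\top}$ is symmetric and idempotent.

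\textbf{Step 1 (decoupling).} By Pythagoras, $\|\mathbf{P}\tilde{\mathbf{D}}\mathbf{W}\mathbf{K}_f-\mathbf{M}_n\|^2 = \|\mathbf{P}\tilde{\mathbf{D}}\mathbf{W}\mathbf{K}_f-\mathbf{P}\mathbf{M}_n\|^2+\|(\mathbf{I}-\mathbf{P})\mathbf{M}_n\|^2$, and the same holds for the utility term with $(\mathbf{K}_0,\mathbf{M}_0)$. Likewise $\|\tilde{\mathbf{D}}\mathbf{W}-\mathbf{W}\|^2 = \|\mathbf{P}(\tilde{\mathbf{D}}\mathbf{W}-\mathbf{W})\|^2 + \|(\mathbf{I}-\mathbf{P})(\tilde{\mathbf{D}}\mathbf{W}-\mathbf{W})\|^2$. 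Hence the objective separates into a part depending only on $\mathbf{X}:=\mathbf{P}\tilde{\mathbf{D}}$ and a part depending only on $(\mathbf{I}-\mathbf{P})\tilde{\mathbf{D}}$. Since only $\mathbf{D}=\mathbf{P}\tilde{\mathbf{D}}$ enters the edited weight $\tilde{\mathbf{W}}=\mathbf{D}\mathbf{W}$, it suffices to determine the optimal $\mathbf{X}$. The complementary block is fixed separately, for example $(\mathbf{I}-\mathbf{P})\tilde{\mathbf{D}}=(\mathbf{I}-\mathbf{P})$, and is annihilated by $\mathbf{P}$ anyway.

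\textbf{Step 2 (normal equations).} For the $\mathbf{X}$-part, $\mathbf{X}$ is constrained to satisfy $\mathbf{P}\mathbf{X}=\mathbf{X}$, so I would minimize
\[
\|\mathbf{X}\mathbf{W}\mathbf{K}_f-\mathbf{P}\mathbf{M}_n\|^2+\|\mathbf{X}\mathbf{W}\mathbf{K}_0-\mathbf{P}\mathbf{M}_0\|^2+\|\mathbf{X}\mathbf{W}-\mathbf{P}\mathbf{W}\|^2 .
\]
I would first solve this without the constraint. Setting the gradient to zero gives
\[
\mathbf{X}\,\mathbf{W}(\mathbf{K}_f\mathbf{K}_f^{\top}+\mathbf{K}_0\mathbf{K}_0^{\top}+\mathbf{I})\mathbf{W}^{\top} = \mathbf{P}(\mathbf{M}_n\mathbf{K}_f^{\top}+\mathbf{M}_0\mathbf{K}_0^{\top}+\mathbf{W})\mathbf{W}^{\top},
\]
that is, $\mathbf{X}\,\mathbf{W}(\mathbf{B}+\mathbf{I})\mathbf{W}^{\top}=\mathbf{P}(\mathbf{A}+\mathbf{W})\mathbf{W}^{\top}$. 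Inverting yields exactly the stated expression. Its left factor $\mathbf{P}$ shows that the unconstrained minimizer already satisfies $\mathbf{P}\mathbf{X}=\mathbf{X}$, so it is also the constrained one. Idempotence of $\mathbf{P}$ then gives $\mathbf{D}^*=\mathbf{P}\tilde{\mathbf{D}}^*=\tilde{\mathbf{D}}^*$ for this representative.

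\textbf{Main obstacle.} The delicate points are the two justifications rather than the algebra. First, the decomposition in Step 1 is needed to explain why the closed form carries $\mathbf{P}\mathbf{W}$ rather than $\mathbf{W}$. Differentiating the raw objective in $\tilde{\mathbf{D}}$ mixes the $\mathbf{P}$ and $\mathbf{I}-\mathbf{P}$ components through the regularizer, producing $\mathbf{P}\mathbf{A}\mathbf{W}^{\top}+\mathbf{W}\mathbf{W}^{\top}$ on the right-hand side. The stated formula is therefore the $\mathbf{P}$-component of the minimizer, and I will phrase the lemma's conclusion in those terms. Second, the inverse must exist. Since $\mathbf{B}$ is positive semidefinite, $\mathbf{W}(\mathbf{B}+\mathbf{I})\mathbf{W}^{\top}\succeq\mathbf{W}\mathbf{W}^{\top}$. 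This is positive definite whenever $\mathbf{W}=\mathbf{W}_{\text{down}}$ has full row rank, which is the generic case because its output dimension is smaller than the MLP's intermediate dimension. I would state this full-row-rank assumption explicitly, which also yields uniqueness of $\mathbf{P}\tilde{\mathbf{D}}^*$.
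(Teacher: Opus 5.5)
Your proof is correct, and it reaches the same normal equation as the paper, $\mathbf{P}\tilde{\mathbf{D}}\,\mathbf{W}(\mathbf{B}+\mathbf{I})\mathbf{W}^{\top}=\mathbf{P}(\mathbf{A}+\mathbf{W})\mathbf{W}^{\top}$, but by a different route.

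\textbf{The paper's route.} It differentiates the raw objective in $\tilde{\mathbf{D}}$ and gets the mixed stationarity condition you describe, in which the regularizer contributes $(\tilde{\mathbf{D}}\mathbf{W}-\mathbf{W})\mathbf{W}^{\top}$ with no $\mathbf{P}$. It then tacitly imposes $\tilde{\mathbf{D}}\in\mathrm{Range}(\mathbf{P})$ as a constraint, by left-multiplying the gradient by $\mathbf{P}$ to get a projected-gradient condition. Finally it solves the bracketed equation without $\mathbf{P}$, projects that base solution, and checks afterwards that the result satisfies the projected condition.

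\textbf{Your route.} You split the objective by Pythagoras before differentiating, so the $\mathbf{P}$-block and the $(\mathbf{I}-\mathbf{P})$-block decouple. You then solve the $\mathbf{P}$-block directly with targets $\mathbf{P}\mathbf{M}_n$, $\mathbf{P}\mathbf{M}_0$, $\mathbf{P}\mathbf{W}$.

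\textbf{What each buys.} Your route gives a cleaner justification. It shows at once that the stated formula is both the minimizer under the paper's implicit constraint and the $\mathbf{P}$-component of the unconstrained minimizer $\tilde{\mathbf{D}}^*+(\mathbf{I}-\mathbf{P})$. Hence $\mathbf{D}^*$ is the same under either reading of the lemma. It also makes explicit the full-row-rank assumption on $\mathbf{W}$, which the paper leaves unstated but which is needed both for the inverse and for uniqueness. The paper's route is shorter. However, its ``solve the bracket, then project'' step is heuristic and is justified only by the after-the-fact check.

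\textbf{A small wording point.} Under your full-row-rank assumption, $(\mathbf{I}-\mathbf{P})\tilde{\mathbf{D}}=\mathbf{I}-\mathbf{P}$ is the unique minimizer of the complementary block, not merely an example of one.
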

This closed-form expression characterizes the optimal transformation $\tilde{\mathbf{D}}$ by balancing targeted knowledge erasure, utility preservation, and parameter stability through the following components. \textbf{(i)} Target-Key Association Matrix ($\mathbf{A}$). The matrix $\mathbf{A} = \mathbf{M}_n \mathbf{K}_f^\top + \mathbf{M}_0 \mathbf{K}_0^{\top}$ represents the aggregated cross-correlation between the desired output targets and the input keys. The term $\mathbf{M}_n \mathbf{K}_f^\top$ encodes the redirection of forget-set inputs toward the nullifying state, while $\mathbf{M}_0 \mathbf{K}_0^{\top}$ anchors the remaining knowledge to its original representations.
     \textbf{(ii)} Key Second Moment ($\mathbf{B}$). The matrix $\mathbf{B} = \mathbf{K}_f \mathbf{K}_f^\top + \mathbf{K}_0 \mathbf{K}_0^{\top}$ is the uncentered second moment matrix of the input keys. It captures the energy distribution and sample density within the key space. In the closed-form solution, the term $(\mathbf{W}(\mathbf{B} + \mathbf{I})\mathbf{W}^\top)^{-1}$ acts as a precision-weighted normalizer, ensuring that the weight update is appropriately scaled relative to the frequency and magnitude of the input features.

\begin{algorithm}[t]
\caption{ZeroUnlearn}
\label{algorithm}
\begin{algorithmic}[1]

\STATE \textbf{Input:} Utility set $\mathcal{E}_{0} = \{\mathbf{t}_{0}^{i}\}$, 
Forget set $\mathcal{E}_f = \{(\mathbf{s}_f^{i}, \mathbf{r}_f^{i}, \mathbf{o}_f^{i})\}$, 
Target state $\mathbf{M}_n$

\STATE \textbf{Output:} Modified generator without knowledge from $\mathcal{E}_f$

\FOR{each target layer to edit}

    \STATE \textit{Phase 1: Knowledge Extraction}

    \FOR{$\mathbf{t}_{0}^{i} \in \mathcal{E}_{0}$}
        \STATE $\mathbf{k}_{0}^{i} \leftarrow k(\mathbf{t}_{0}^{i})$
    \ENDFOR
    \STATE $\mathbf{K}_{0} \leftarrow [\mathbf{k}_{0}^{1}, \ldots, \mathbf{k}_{0}^{n}]$

    \FOR{$\mathbf{s}_f^{i} \in \mathcal{E}_f$}
        \STATE $\mathbf{k}_f^{i} \leftarrow \frac{1}{N_x} \sum_{j=1}^{N_x} k(\mathrm{concat}[\mathbf{x}_j, \mathbf{s}_f^{i}])$
        \STATE \hspace{1em} \textit{/* $\mathbf{x}_j$ is a random string prefix */}
    \ENDFOR
    \STATE $\mathbf{K}_f \leftarrow [\mathbf{k}_f^{1}, \ldots, \mathbf{k}_f^{n}]$

    \STATE \textit{Phase 2: Matrix Construction}

    \STATE $\mathbf{M}_{0} \leftarrow \mathbf{W}\mathbf{K}_{0}$

    \STATE $\mathbf{A} \leftarrow \mathbf{M}_n \mathbf{K}_f^{\top} + \mathbf{M}_0 \mathbf{K}_0^{\top}$
    \STATE \hspace{1em} \textit{/* $\mathbf{M}_n$ is the MLP output different from $\mathbf{M}_f$ */}

    \STATE $\mathbf{B} \leftarrow \mathbf{K}_f \mathbf{K}_f^{\top} + \mathbf{K}_0 \mathbf{K}_0^{\top}$

    \STATE $\mathbf{M}_f^{\top} = \mathbf{U}\mathbf{\Sigma}\mathbf{V}^{\top}$
    \STATE $\mathbf{P} \leftarrow \mathbf{I} - \mathbf{V}\mathbf{V}^{\top}$

    \STATE \textit{Phase 3: Weight Update}

    \STATE $\mathbf{D}^* \leftarrow \mathbf{P} ( \mathbf{A} + \mathbf{W} )\mathbf{W}^{\top} ( \mathbf{W} ( \mathbf{B} + \mathbf{I} ) \mathbf{W}^{\top} )^{-1}$

    \STATE $\mathbf{W} \leftarrow \mathbf{D}^*\mathbf{W}$

\ENDFOR

\end{algorithmic}
\end{algorithm}

Thus, our unlearning paradigm is performed by left-multiplying $\mathbf{D}^*$ with the weight matrix $\mathbf{W}$ of the selected layer, without compromising model capacity. The strategy for locating the layers to be edited is described in the experimental section. The algorithmic procedure of \sysname{} is presented in Algorithm~\ref{algorithm}. 
Here, $k(\cdot)$ denotes the function that extracts the final token of the subject to represent the key corresponding to a given piece of knowledge. 
In practice, we prepend randomly sampled prefixes to the subject in order to enhance generalization~\cite{meng2022mass}.

\section{Few-shot Constraints of \sysname{} and an Alternative Solution}

The efficacy of \sysname{} in few-shot unlearning scenarios can be analyzed through the spectral properties and the rank of the projection matrix $\mathbf{P}$. Given that the forget set $\mathcal{E}_f$ contains a limited number of samples $n$, where $n \ll d$ (and $d$ is the hidden dimension of the model), the original knowledge matrix $\mathbf{M}_f \in \mathbb{R}^{d \times n}$ is inherently low-rank.
Formally, let $r = \text{rank}(\mathbf{M}_f^\top) \leq n$. The orthogonal projector $\mathbf{P} = \mathbf{I} - \mathbf{V}\mathbf{V}^\top$ is constructed from the $r$ dominant singular vectors of $\mathbf{M}_f^\top$. According to the rank-nullity theorem, the rank of the projection matrix is:
{
\small
\begin{equation}
\text{rank}(\mathbf{P}) = d - r \geq d - n.
\end{equation}
}

In the few-shot scenario, since $n$ is extremely small relative to $d$, $\text{rank}(\mathbf{P})$ remains near-maximal. This high dimensionality of the null space implies that the model retains $d - n$ degrees of freedom to perform the unlearning task. Geometrically, the ``forbidden subspace" spanned by the forget set is a tiny, low-dimensional filament within the vast activation manifold. By constraining the update $\mathbf{D}$ to the null space of $\mathbf{M}_f^\top$, \sysname{} ensures that the modification is accurate. 
This ensures that while the specific directions corresponding to $\mathbf{M}_f$ are neutralized (where the projection gain is zero), the vast majority of the weight matrix's expressive capacity remains untouched. Consequently, the model can overwrite sensitive knowledge with minimal impact on its fundamental linguistic proficiency, effectively resolving the trade-off between forgetting precision and general utility.

Meanwhile, to extend our framework to multi-sample unlearning scenarios, we propose an alternative scheme based on additive weight editing. By defining the updated weight matrix as $\mathbf{\tilde{W}} = \mathbf{W} + \mathbf{D}_m$, the optimization objective in Eq.~\ref{eq1}, augmented with a regularization term, can be reformulated as:
{
\small
\begin{equation}
\label{eq:additive_objective}
\begin{aligned}
 \operatorname*{min}_{\mathbf{D_m}} \; & \| \mathbf{M}_f^\top ((\mathbf{W} + \mathbf{D}_m)\mathbf{K}_f) \|^2 + \| (\mathbf{W} + \mathbf{D}_m)\mathbf{K}_f - \mathbf{M}_n \|^2 \\
& + \| (\mathbf{W} + \mathbf{D}_m)\mathbf{K}_0 - \mathbf{M}_0 \|^2 +  \|\mathbf{D}_m\|^2,
\end{aligned}
\end{equation}
}
where $\mathbf{D}_m$ represents the additive perturbation matrix. Similarly, to reconcile the trade-off between unlearning efficacy and general utility, we mandate that $\mathbf{K}_0$ resides in the right null space of the additive editing matrix $\mathbf{D}_m$. Specifically, we perform the SVD on the second moment $\mathbf{K}_0 \mathbf{K}_0^\top$, yielding:
\begin{equation}
\mathbf{K}_0 \mathbf{K}_0^\top = \mathbf{U}_m \boldsymbol{\Sigma}_m \mathbf{U}_m^\top.
\end{equation}

We construct $\mathbf{U}_m'$ by extracting the eigenvectors from $\mathbf{U}_m$ that 
correspond to the zero eigenvalues. These vectors form an orthonormal basis for the 
null space, ensuring that any additive update $\mathbf{E}$ parameterized by 
$\mathbf{P}_m = \mathbf{U}_m' (\mathbf{U}_m')^\top$ satisfies the hard constraint 
$\mathbf{E}\mathbf{K}_0 = \mathbf{0}$.
 This projector $\mathbf{P}_m$ maps any vector onto the right null space of $\mathbf{K}_0^\top$. By reparameterizing the additive perturbation as $\mathbf{D}_m = \tilde{\mathbf{D}}_m \mathbf{P}_m$, we ensure that:
$\mathbf{D_m} \mathbf{K}_0 = \tilde{\mathbf{D}}_m \mathbf{P}_m \mathbf{K}_0 = \mathbf{0}$,
which theoretically guarantees that the utility term in Eq.~\ref{eq:additive_objective} vanishes identically. Consequently, the optimization problem for multi-sample unlearning is simplified to finding the optimal $\hat{\mathbf{E}}$ that minimizes the remaining forget-related terms:
{
\small
\begin{equation}
\label{eq:multi ul}
\begin{aligned}
    \min_{\tilde{\mathbf{D}}_m}& \| \mathbf{M}_f^\top ((\mathbf{W} + \tilde{\mathbf{D}}_m\mathbf{P}_m)\mathbf{K}_f) \|^2 \\ &+ \| (\mathbf{W} + \tilde{\mathbf{D}}_m\mathbf{P}_m)\mathbf{K}_f - \mathbf{M}_n \|^2 + \|\tilde{\mathbf{D}}_m\mathbf{P}_m\|^2.
\end{aligned}
\end{equation}
}
\begin{lemma}[Closed-form solution for Multiple Unlearning]
\label{lemma:additive_solution}
The optimization objective presented in Eq.~\ref{eq:multi ul} constitutes a Sylvester equation with respect to the effective update matrix. The optimal solution $\tilde{\mathbf{D}}_m^*$ admits a closed-form expression via the vectorization operator:
{
\small
\begin{equation}
\begin{aligned}
    \text{vec}(\tilde{\mathbf{D}}_m^*) = \left( \mathbf{H}^{\top} \otimes \mathbf{Q} + \mathbf{C}^{\top} \otimes \mathbf{I} \right)^{-1} \text{vec}(\mathbf{Z}),
\end{aligned}
\end{equation}
}
where $\otimes$ denotes the Kronecker product, and $\operatorname{vec}(\cdot)$ is the vectorization operator. The matrices involved are defined as follows:
{
\small
\begin{equation}
\begin{aligned}
    \mathbf{Q} &= \mathbf{M}_f \mathbf{M}_f^{\top} + \mathbf{I}, \quad
\mathbf{H} = \mathbf{P}_m \mathbf{K}_f \mathbf{K}_f^{\top} \mathbf{P}_m^{\top}, \quad
\mathbf{C} = \mathbf{P}_m \mathbf{P}_m^{\top}, \\
\mathbf{Z} &= \mathbf{M}_n \mathbf{K}_f^{\top} \mathbf{P}_m^{\top} - \mathbf{Q} \mathbf{W} \mathbf{K}_f \mathbf{K}_f^{\top} \mathbf{P}_m^{\top}.
\end{aligned}
\end{equation}
}
\end{lemma}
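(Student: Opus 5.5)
We write $\mathbf{X}=\tilde{\mathbf{D}}_m$ and $\mathbf{F}=\mathbf{W}\mathbf{K}_f$, so that $\mathbf{X}\mathbf{P}_m\mathbf{K}_f$ is the effective forget-set perturbation. The plan is to treat Eq.~\ref{eq:multi ul} as an unconstrained convex quadratic in $\mathbf{X}$. We compute its matrix gradient with the rules $\partial\|\mathbf{A}\mathbf{X}\mathbf{B}-\mathbf{C}_0\|^2/\partial\mathbf{X}=2\mathbf{A}^\top(\mathbf{A}\mathbf{X}\mathbf{B}-\mathbf{C}_0)\mathbf{B}^\top$, set it to zero, and recognize the resulting normal equation as a generalized Sylvester equation $\mathbf{Q}\mathbf{X}\mathbf{H}+\mathbf{X}\mathbf{C}=\mathbf{Z}$. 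The final step vectorizes this equation using $\operatorname{vec}(\mathbf{A}\mathbf{X}\mathbf{B})=(\mathbf{B}^\top\otimes\mathbf{A})\operatorname{vec}(\mathbf{X})$.

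\textbf{Term by term.} We differentiate each of the three terms separately.
\begin{itemize}
\item \emph{Zero term.} Take $\mathbf{A}=\mathbf{M}_f^\top$, $\mathbf{B}=\mathbf{P}_m\mathbf{K}_f$ and $\mathbf{C}_0=-\mathbf{M}_f^\top\mathbf{F}$. Its gradient is $2\mathbf{M}_f\mathbf{M}_f^\top(\mathbf{F}+\mathbf{X}\mathbf{P}_m\mathbf{K}_f)\mathbf{K}_f^\top\mathbf{P}_m^\top$.
\item \emph{Forget term.} Its gradient is $2(\mathbf{F}+\mathbf{X}\mathbf{P}_m\mathbf{K}_f-\mathbf{M}_n)\mathbf{K}_f^\top\mathbf{P}_m^\top$.
\item \emph{Regularizer.} Its gradient is $2\mathbf{X}\mathbf{P}_m\mathbf{P}_m^\top$.
\end{itemize}

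\textbf{Normal equation.} Summing the three gradients, setting the sum to zero, and collecting the $\mathbf{X}$-terms on the left gives
\begin{equation*}
(\mathbf{M}_f\mathbf{M}_f^\top+\mathbf{I})\,\mathbf{X}\,\mathbf{P}_m\mathbf{K}_f\mathbf{K}_f^\top\mathbf{P}_m^\top+\mathbf{X}\,\mathbf{P}_m\mathbf{P}_m^\top
=\mathbf{M}_n\mathbf{K}_f^\top\mathbf{P}_m^\top-(\mathbf{M}_f\mathbf{M}_f^\top+\mathbf{I})\mathbf{W}\mathbf{K}_f\mathbf{K}_f^\top\mathbf{P}_m^\top .
\end{equation*}
This is exactly $\mathbf{Q}\mathbf{X}\mathbf{H}+\mathbf{X}\mathbf{C}=\mathbf{Z}$ with $\mathbf{Q},\mathbf{H},\mathbf{C},\mathbf{Z}$ as in the statement. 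For the first term, the forget-term and zero-term contributions combine through the identity factor in $\mathbf{Q}$. For the right-hand side, both contributions involving $\mathbf{F}=\mathbf{W}\mathbf{K}_f$ move over and combine as $\mathbf{Q}\mathbf{W}\mathbf{K}_f\mathbf{K}_f^\top\mathbf{P}_m^\top$, matching $\mathbf{Z}$. Applying $\operatorname{vec}$ and the Kronecker identity turns the equation into $(\mathbf{H}^\top\otimes\mathbf{Q}+\mathbf{C}^\top\otimes\mathbf{I})\operatorname{vec}(\mathbf{X})=\operatorname{vec}(\mathbf{Z})$, which is the claimed formula once the matrix is inverted.

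\textbf{Main obstacle: invertibility.} Both $\mathbf{H}$ and $\mathbf{C}$ are positive semidefinite, since they are Gram-type products, and $\mathbf{Q}\succ 0$. Hence $\mathbf{H}^\top\otimes\mathbf{Q}+\mathbf{C}^\top\otimes\mathbf{I}$ is positive semidefinite as a sum of Kronecker products of PSD matrices, so the objective is convex and any stationary point is a global minimizer. It is, however, generally singular. $\mathbf{P}_m$ projects onto the null space of $\mathbf{K}_0\mathbf{K}_0^\top$, so both $\mathbf{H}$ and $\mathbf{C}=\mathbf{P}_m$ annihilate the range of $\mathbf{K}_0$. The objective also depends on $\mathbf{X}$ only through $\mathbf{X}\mathbf{P}_m$.

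We would therefore state the result with the inverse interpreted on the relevant subspace, taking either the Moore--Penrose pseudoinverse, or the minimum-norm solution with $\mathbf{X}=\mathbf{X}\mathbf{P}_m$. On the range of $\mathbf{P}_m$, $\mathbf{C}$ acts as the identity, so the operator there is at least $\mathbf{I}\otimes\mathbf{I}$ and hence strictly positive definite. This yields a unique optimal effective update $\mathbf{X}\mathbf{P}_m$. Verifying this restriction argument carefully is the step we expect to need the most care; the gradient algebra itself is routine.
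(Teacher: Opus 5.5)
Your derivation matches the paper's proof: the same three gradients, the same normal equation $\mathbf{Q}\tilde{\mathbf{D}}_m\mathbf{H}+\tilde{\mathbf{D}}_m\mathbf{C}=\mathbf{Z}$, and the same vectorization step. Your invertibility caveat is correct, and the paper's proof skips it: because the objective depends on $\tilde{\mathbf{D}}_m$ only through $\tilde{\mathbf{D}}_m\mathbf{P}_m$, the matrix $\mathbf{H}^{\top}\otimes\mathbf{Q}+\mathbf{C}^{\top}\otimes\mathbf{I}$ has kernel $\{\operatorname{vec}(\mathbf{X}):\mathbf{X}\mathbf{P}_m=\mathbf{0}\}$ and is singular whenever $\mathbf{P}_m\neq\mathbf{I}$, so the stated inverse only makes sense as a pseudoinverse, or as the inverse on $\{\mathbf{X}:\mathbf{X}=\mathbf{X}\mathbf{P}_m\}$ (which contains $\mathbf{Z}$ because $\mathbf{Z}=\mathbf{Z}\mathbf{P}_m$), as you propose.
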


After computing the vector solution, $\tilde{\mathbf{D}}_m^*$ is recovered by reshaping the result to the original matrix dimensions. 

\subsection{Complexity Analysis and Practical Optimization}
While Lemma~\ref{lemma:additive_solution} provides a theoretically rigorous global optimum for the multi-sample unlearning objective, directly computing the closed-form solution is computationally prohibitive for modern LLMs.

\textbf{Computational Bottleneck.} The primary bottleneck lies in the inversion of the term $\mathbf{K}_{\text{kron}} = \mathbf{H}^{\top} \otimes \mathbf{Q} + \mathbf{C}^{\top} \otimes \mathbf{I}$. Let $d$ denote the hidden dimension of the model. The Kronecker product results in a matrix $\mathbf{K}_{\text{kron}} \in \mathbb{R}^{d^2 \times d^2}$.
Standard matrix inversion algorithms  scale cubically with the matrix dimension. Therefore, the time complexity for solving the vectorized equation is:
\begin{equation}
\mathcal{O}((d^2)^3) = \mathcal{O}(d^6).
\end{equation}
Furthermore, the space complexity required to store $\mathbf{K}_{\text{kron}}$ is $\mathcal{O}(d^4)$. For a typical LLM where $d > 1000$, storing this matrix would require huge memory, rendering the closed-form solution intractable.
\begin{figure*}[!t]
    \centering
\includegraphics[width=0.9\linewidth]{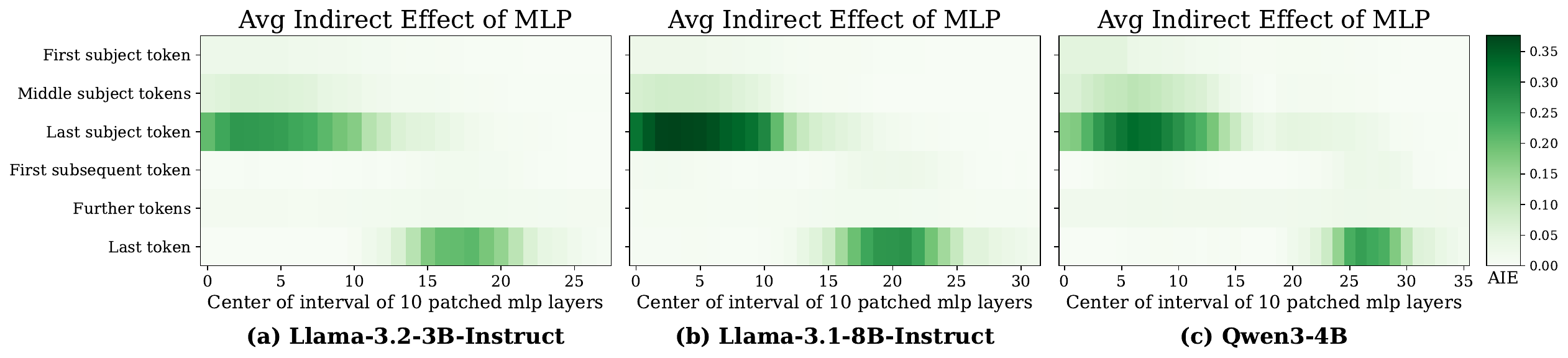}
    \caption{Causal tracing for knowledge localization.}
    \label{fig:trace}
    \vspace{-3mm}
\end{figure*}

\textbf{Gradient-Based Approximation.} To circumvent these limitations, we adopt an iterative optimization strategy. Since the objective function in Eq.~\ref{eq:multi ul} is convex with respect to $\tilde{\mathbf{D}}_m$ (composed of quadratic terms), Gradient Descent (GD) is guaranteed to converge to the global optimum.
By employing GD, we avoid the explicit construction of the Kronecker product. The gradients can be computed efficiently using standard backpropagation, with a computational complexity of $\mathcal{O}(d^2)$ per iteration. We refer to this multi-sample unlearning approach as \sysnamegd{}.

%% file: sec/experiments.tex
\section{Experiments}

\subsection{Settings}
\textbf{Base Model and Baselines.}
We employ three widely adopted models, Llama-3.2-3B-Instruct (Llama-3.2), Llama-3.1-8B-Instruct (Llama-3.1)~\cite{grattafiori2024llama} and Qwen-3-4B (Qwen-3)~\cite{yang2025qwen3}, as our base models.
Since knowledge editing-based approaches typically utilize only the forget set, we adopt GA~\cite{jang2023knowledge}, which adheres to the same data constraint. Regarding editing-based methods, we evaluate four representative baselines: FT~\cite{zhu2020modifying}, ROME~\cite{meng2022locating}, MEMIT~\cite{meng2022mass}, and AlphaEdit~\cite{fang2024alphaedit}. For a comprehensive description of these baselines, please refer to Appendix~\ref{appendix:baselines}.

\textbf{Datasets and Metrics.}
To validate the effectiveness of our method, we utilize the relation-pair dataset \texttt{MCF}~\cite{meng2022locating}, alongside two question answering datasets: \texttt{ZsRE}~\cite{levy2017zeroshotrelationextractionreading} and \texttt{MQUAKE}~\cite{zhong2024mquakeassessingknowledgeediting}. To quantitatively assess the unlearning capabilities, we adopt the following metrics:
     \textbf{(i) Efficacy (Eff.):} This metric evaluates the residual retention of the specific knowledge intended to be forgotten. It is defined as the average probability that the unlearned model $f_{\theta'}$ generates the original target answer $y_f$ given the input query $x_f$ from the forget set $\mathcal{D}_f$. Formally, this can be expressed as:
\begin{equation}
    \text{Eff} = \frac{1}{|\mathcal{D}_f|} \sum_{(x_f, y_f) \in \mathcal{D}_f} P_{\theta'}(y_f \mid x_f),
\end{equation}
where $P_{\theta'}(y_f \mid x_f)$ denotes the likelihood assigned by the model to the ground-truth label. A lower score ($\downarrow$) indicates better unlearning performance, implying that the model effectively ceases to produce the original sensitive response.
    (ii) \textbf{Generalization (Gen.):} This measures the consistency of unlearning across paraphrased queries. It assesses whether the model successfully suppresses the sensitive information even when the input is rephrased. 
    (iii) \textbf{Specificity (Spe.):} This metric evaluates the preservation of non-targeted knowledge. We measure the model's accuracy on the neighborhood query, where a higher score ($\uparrow$) indicates that the unlearning is precise and does not cause collateral damage to related knowledge.
   (iv)  \textbf{PPL:} To ensure the model's general linguistic capabilities are not degraded, we compute the perplexity (PPL) on a hold-out corpus (e.g., WikiText). A lower PPL ($\downarrow$) signifies that the model maintains its generative quality.
The detailed information about datasets are provided in~\ref{appendix:dataset_details}. 
Moreover, all results for Qwen-3, all results on the \texttt{MQUAKE} dataset, and the performance of Llama-3.1 in the multiple unlearning scenario are provided in Appendix~\ref{app:complete_results}.

\begin{table*}[t]
\small
\centering
\caption{Few-shot unlearning results of \sysname{} on \texttt{MCF} and \texttt{ZsRE} datasets.}
\label{tab:main results}
\setlength{\tabcolsep}{5pt} 
\renewcommand{\arraystretch}{1.05} 

\begin{tabular}{l >{\centering\arraybackslash}p{6.5mm} cccc cccc}
\toprule
\multirow{2}{*}{\textbf{Method}} & \multirow{2}{*}{\textbf{Model}} 
& \multicolumn{4}{c}{\textbf{\texttt{MCF}}} 
& \multicolumn{4}{c}{\textbf{\texttt{ZsRE}}} \\
\cmidrule(lr){3-6} \cmidrule(lr){7-10}
 & 
 & \textbf{Eff.}$\downarrow$ & \textbf{Gen.}$\downarrow$ & \textbf{Spe.}$\uparrow$ & \textbf{PPL}$\downarrow$
 & \textbf{Eff.}$\downarrow$ & \textbf{Gen.}$\downarrow$ & \textbf{Spe.}$\uparrow$ & \textbf{PPL}$\downarrow$ \\
\midrule
Base & \multirow{7}{*}{\rotatebox{90}{Llama-3.2}}  
& 18.20{\scriptsize$\pm$3.84} & 20.30{\scriptsize$\pm$5.33} & 19.60{\scriptsize$\pm$3.47} & 12.88{\scriptsize$\pm$0.00} 
& 32.82{\scriptsize$\pm$4.09} & 32.23{\scriptsize$\pm$4.16} & 28.12{\scriptsize$\pm$2.65} & 12.88{\scriptsize$\pm$0.00} \\
\cmidrule(r){1-1}
GA & 
& 2.00{\scriptsize$\pm$3.34} & 1.80{\scriptsize$\pm$2.89} & 1.06{\scriptsize$\pm$1.79} & $>$1000 
& 1.41{\scriptsize$\pm$1.36} & 1.16{\scriptsize$\pm$1.42} & 3.53{\scriptsize$\pm$1.41} &  $>$1000 \\
FT & 
& 0.00{\scriptsize$\pm$0.00} & 0.00{\scriptsize$\pm$0.00} & 0.00{\scriptsize$\pm$0.00} & 18.25{\scriptsize$\pm$1.28} 
& 28.83{\scriptsize$\pm$3.96} & 27.70{\scriptsize$\pm$3.34} & 26.80{\scriptsize$\pm$2.57} & 13.24{\scriptsize$\pm$0.11} \\
ROME & 
& 18.20{\scriptsize$\pm$3.84} & 20.30{\scriptsize$\pm$5.37} & 19.50{\scriptsize$\pm$3.51} & 12.88{\scriptsize$\pm$0.20} 
& 32.80{\scriptsize$\pm$4.20} & 32.17{\scriptsize$\pm$4.09} & 28.05{\scriptsize$\pm$2.66} & 12.89{\scriptsize$\pm$0.20} \\
MEMIT & 
& 17.00{\scriptsize$\pm$4.22} & 18.30{\scriptsize$\pm$4.92} & 19.20{\scriptsize$\pm$3.62} & 12.86{\scriptsize$\pm$0.02} 
& 32.32{\scriptsize$\pm$4.00} & 31.17{\scriptsize$\pm$4.61} & 28.01{\scriptsize$\pm$2.60} & 12.89{\scriptsize$\pm$0.02} \\
AlphaEdit & 
& 2.60{\scriptsize$\pm$2.37} & 11.80{\scriptsize$\pm$3.94} & 18.36{\scriptsize$\pm$3.63} & 12.84{\scriptsize$\pm$0.02} 
& 29.59{\scriptsize$\pm$3.95} & 29.90{\scriptsize$\pm$4.67} & 27.80{\scriptsize$\pm$2.77} & 12.88{\scriptsize$\pm$0.04} \\
\rowcolor{gray!15}\textbf{\sysname{}} & 
& 0.40{\scriptsize$\pm$0.80} & 4.60{\scriptsize$\pm$2.24} & 14.90{\scriptsize$\pm$2.93} & 13.06{\scriptsize$\pm$0.18} 
& 27.85{\scriptsize$\pm$3.87} & 27.52{\scriptsize$\pm$3.87} & 27.73{\scriptsize$\pm$2.70} & 13.08{\scriptsize$\pm$0.06} \\

\midrule
\midrule
Base & \multirow{7}{*}{\rotatebox{90}{Llama-3.1}}  
& 24.60{\scriptsize$\pm$5.29} & 22.80{\scriptsize$\pm$4.35} & 21.96{\scriptsize$\pm$4.28} & 7.47{\scriptsize$\pm$0.00}
& 40.42{\scriptsize$\pm$4.92} & 36.84{\scriptsize$\pm$4.24} & 29.87{\scriptsize$\pm$2.30} & 7.47{\scriptsize$\pm$0.00} \\
\cmidrule(r){1-1}
GA & 
& 1.20{\scriptsize$\pm$1.83} & 0.90{\scriptsize$\pm$1.81} & 0.26{\scriptsize$\pm$0.72} & $>$1000
& 0.27{\scriptsize$\pm$0.61} & 0.27{\scriptsize$\pm$0.61} & 0.00{\scriptsize$\pm$0.00} & $>$1000 \\
FT & 
& 0.00{\scriptsize$\pm$0.00} & 0.00{\scriptsize$\pm$0.00} & 0.00{\scriptsize$\pm$0.00} & 10.23{\scriptsize$\pm$0.67}
& 31.36{\scriptsize$\pm$2.19} & 30.91{\scriptsize$\pm$2.96} & 26.99{\scriptsize$\pm$2.01} & 8.16{\scriptsize$\pm$0.08} \\
ROME & 
& 24.40{\scriptsize$\pm$5.04} & 22.60{\scriptsize$\pm$4.10} & 21.86{\scriptsize$\pm$4.28} & 7.48{\scriptsize$\pm$0.01}
& 40.46{\scriptsize$\pm$4.85} & 36.84{\scriptsize$\pm$4.16} & 29.99{\scriptsize$\pm$2.37} & 7.48{\scriptsize$\pm$0.01} \\
MEMIT & 
& 9.60{\scriptsize$\pm$4.63} & 16.20{\scriptsize$\pm$4.07} & 21.08{\scriptsize$\pm$4.24} & 7.51{\scriptsize$\pm$0.03}
& 35.15{\scriptsize$\pm$3.99} & 34.60{\scriptsize$\pm$3.15} & 30.05{\scriptsize$\pm$2.46} & 7.48{\scriptsize$\pm$0.03} \\
AlphaEdit & 
& 0.20{\scriptsize$\pm$0.60} & 7.80{\scriptsize$\pm$2.27} & 19.74{\scriptsize$\pm$4.20} & 7.49{\scriptsize$\pm$0.05}
& 34.12{\scriptsize$\pm$4.16} & 34.19{\scriptsize$\pm$3.33} & 29.93{\scriptsize$\pm$2.49} & 7.48{\scriptsize$\pm$0.07} \\
\rowcolor{gray!15}\textbf{\sysname{}} & 
& 0.00{\scriptsize$\pm$0.00} & 4.60{\scriptsize$\pm$2.11} & 16.82{\scriptsize$\pm$3.64} & 7.77{\scriptsize$\pm$0.06}
& 32.67{\scriptsize$\pm$3.43} & 32.39{\scriptsize$\pm$3.34} & 29.67{\scriptsize$\pm$2.36} & 7.76{\scriptsize$\pm$0.10} \\
\bottomrule
\end{tabular}
\vspace{-2mm}
\end{table*}   
\begin{table*}[t]
\small
\centering
\caption{Multiple unlearning results of \sysnamegd{} on \texttt{MCF} and \texttt{ZsRE} datasets.}
\label{tab:multiple_results}
\setlength{\tabcolsep}{7pt} 
\renewcommand{\arraystretch}{1.05} 

\begin{tabular}{l >{\centering\arraybackslash}p{8mm} cccc cccc}
\toprule
\multirow{2}{*}{\textbf{Method}} & \multirow{2}{*}{\textbf{Model}} 
& \multicolumn{4}{c}{\textbf{\texttt{MCF}}} 
& \multicolumn{4}{c}{\textbf{\texttt{ZsRE}}} \\
\cmidrule(lr){3-6} \cmidrule(lr){7-10}
 & 
 & \textbf{Eff.}$\downarrow$ & \textbf{Gen.}$\downarrow$ & \textbf{Spe.}$\uparrow$ & \textbf{PPL}$\downarrow$
 & \textbf{Eff.}$\downarrow$ & \textbf{Gen.}$\downarrow$ & \textbf{Spe.}$\uparrow$ & \textbf{PPL}$\downarrow$ \\
\midrule
Base & \multirow{7}{*}{\rotatebox{90}{Llama-3.2}}  
& 22.10 & 19.60 & 20.59 & 12.88 & 33.76 & 33.44 & 29.55 & 12.88 \\
\cmidrule(r){1-1}
GA & 
& 0.00 & 0.00 & 0.00 & $>$1000 & 0.00 & 0.00 & 0.11 & $>$1000 \\
FT & 
& 0.00 & 0.00 & 0.00 & 63.70 & 25.13 & 24.52 & 24.62 & 15.57 \\
ROME & 
& 21.90 & 19.55 & 20.47 & 12.89 & 33.67 & 33.06 & 29.59 & 12.90 \\
MEMIT & 
& 13.80 & 14.45 & 17.75 & 12.77 & 29.78 & 29.60 & 29.16 & 13.01 \\
AlphaEdit & 
& 1.40 & 10.20 & 15.68 & 12.78 & 27.66 & 27.54 & 28.04 & 13.25 \\
\rowcolor{gray!15}\textbf{\sysnamegd{}} & 
& 0.00 & 5.10 & 12.41 & 13.05 & 25.29 & 25.22 & 25.32 & 13.60 \\


\bottomrule
\end{tabular}
\vspace{-3mm}
\end{table*}
\subsection{Target Layer Identification}
To achieve precise unlearning, identifying the location of knowledge within the pre-trained weights is a prerequisite. 
Following the paradigm of model interpretability~\cite{meng2022locating}, we employ Causal Tracing to analyze the contribution of different model components. Using 1000 prompts in~\cite{meng2022locating}, we calculate the Average Indirect Effect (AIE) for the MLP module at each layer. 
The process involves two steps: 
(i) corrupting the subject representation in the input to degrade the model's prediction probability for the target fact; and 
(ii) restoring the activation of specific MLP layers to their original state during inference. 
The degree to which the probability of the correct answer recovers quantifies the layer's causal importance.
As illustrated in Figure~\ref{fig:trace}, the results across different models exhibit a consistent localized pattern. 
We observe that the significant causal effects are not uniformly distributed but are concentrated within some continuous layers. 
Based on this empirical evidence, we designate these high-impact layers as the target scope for our unlearning intervention (Appendix~\ref{app:trace}). 
This selection aims to modify the parameters most responsible for factual retrieval while minimizing perturbations to unrelated components.

\subsection{Results of Few-Shot Unlearning}
In the few-shot scenario, we conduct experiments using ten random seeds (ranging from 1 to 10), where 50 samples are randomly selected per seed for the unlearning process.
Table~\ref{tab:main results} presents the comprehensive performance of our proposed \sysname{} against various baselines. 

\textbf{Unlearning Efficacy.} 
As indicated by the efficacy (Eff.) metric, \sysname{} achieves state-of-the-art performance in erasing target knowledge. 
On the \texttt{MCF} dataset with Llama-3.1, our method reduces the efficacy to \textbf{0\%}, demonstrating complete removal of the sensitive information. 
In contrast, traditional knowledge editing methods like ROME and MEMIT struggle to effectively unlearn in this setting (e.g., ROME retains 24.40\% efficacy, similar to the base model). 
While AlphaEdit shows improvement over other editors, \sysname{} consistently outperforms it, achieving significantly lower residual knowledge retention.

\textbf{Preserving Model Capabilities.} 
A critical challenge in unlearning is avoiding catastrophic forgetting. 
Naive approaches like GA achieve low efficacy scores but at the cost of destroying the model's linguistic abilities, as evidenced by the exploded PPL ($>$1000) and collapsed specificity (Spe.). 
Similarly, FT suffers from overfitting, resulting in a complete loss of specificity (0\% on \texttt{MCF}) and degraded perplexity. 
\sysname{}, conversely, maintains a PPL score comparable to the base model and retains high specificity. 
This confirms that our method performs surgical unlearning without causing collateral damage to the model's general generative capabilities or unrelated knowledge.
\begin{figure*}[!t]
    \centering
\includegraphics[width=0.921\linewidth]{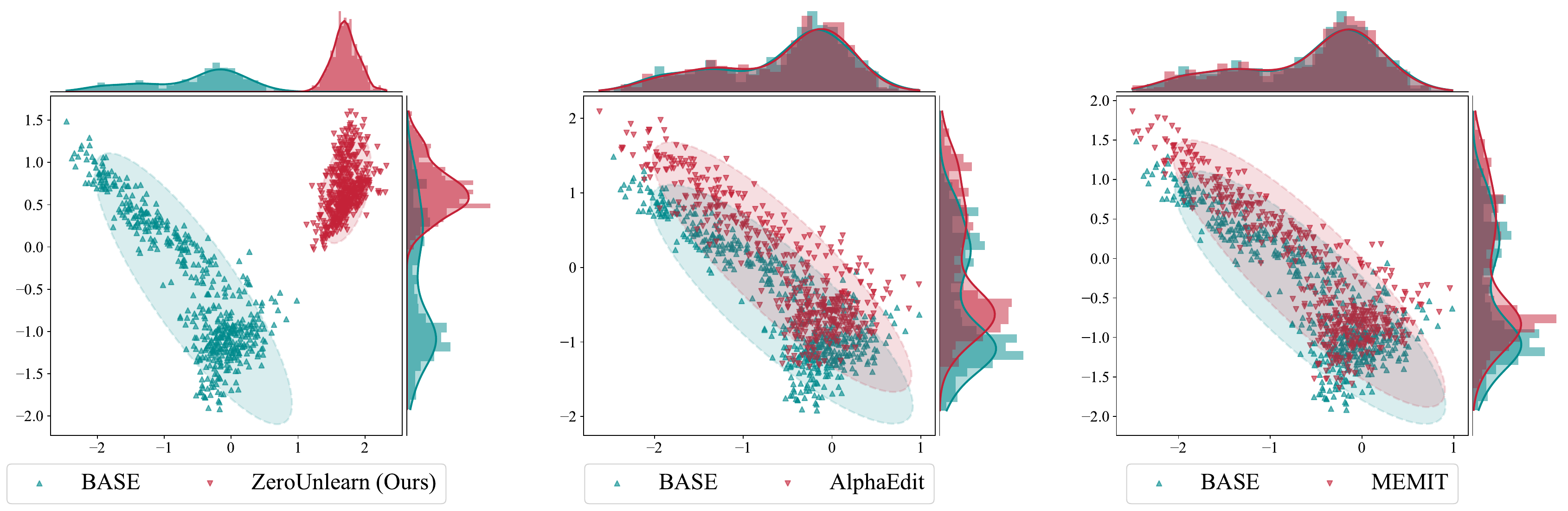}
    \caption{PCA visualization of MLP representation shifts at Layer 16 of Llama-3.2 on the \texttt{MCF} dataset.}
    \label{fig:tsne_mcf}

\end{figure*}
\begin{figure}[!t]
    \centering
\includegraphics[width=\linewidth]{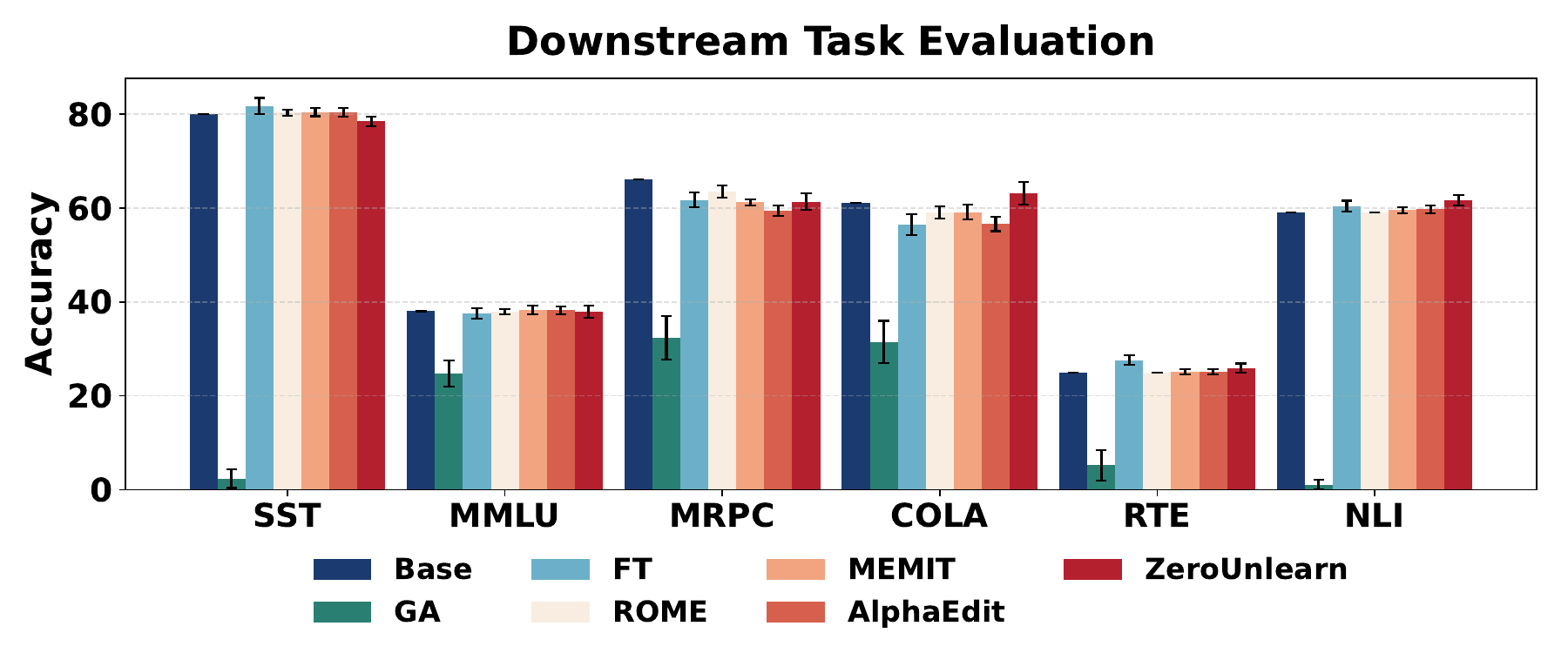}
    \caption{Evaluation of general capabilities on Llama-3.2.}
    \label{fig:glue}

\end{figure}

\textbf{Generalization-Specificity Trade-off.}
Ideally, unlearning should be surgical, removing only the target without affecting neighborhood knowledge. 
As shown in Table~\ref{tab:main results}, methods that fail to effectively unlearn (e.g., ROME and MEMIT) naturally retain high Specificity, similar to the base model. 
However, among methods that achieve significant unlearning efficacy (Eff. $\approx 0$), \sysname{} demonstrates a superior capability to preserve unrelated knowledge. 
While we observe a moderate decrease in specificity compared to the base model, our method avoids the catastrophic collapse seen in GA and FT (where Spe. drops to near zero). 
This indicates that \sysname{} strikes a more practical trade-off, successfully erasing sensitive information while maintaining a reasonable level of neighborhood knowledge.

\subsection{Results of  Multiple Unlearning}
In the multiple unlearning scenario, we select 1,000 samples to perform the unlearning process.
Table~\ref{tab:multiple_results} presents the performance of \sysnamegd{} in the challenging multi-sample unlearning scenario. As evidenced by the results on \texttt{MCF}, our gradient-based variant demonstrates exceptional efficacy and scalability, achieving \textbf{0\%} Eff. on Llama-3.2. This indicates a complete erasure of targeted batch knowledge, significantly outperforming dedicated mass-editing baselines like MEMIT and AlphaEdit, which struggle to eliminate residual information. Crucially, \sysnamegd{} achieves this thorough unlearning without the catastrophic model collapse observed in optimization-based approaches; while GA and FT lead to exploded perplexity and a total loss of specificity, our method maintains the model's linguistic capabilities within a stable, functional range.

Regarding locality, \sysnamegd{} experiences a notable reduction in Specificity, which is an expected trade-off for achieving perfect erasure in large batches. However, unlike optimization-based baselines (GA and FT) that suffer from total locality collapse (Specificity $\approx$ 0), our method maintains a functional level of neighborhood knowledge, effectively balancing the aggressive removal of sensitive data with the preservation of general model utility.

\subsection{Representation Visualization}

To visually verify the unlearning effect, we project the MLP output of a certain layer on the forget set samples into a 2D space using PCA~\cite{mackiewicz1993principal}. As shown in Figure~\ref{fig:tsne_mcf}, the results reveal a distinct contrast. For \sysname{} (left), the representations of the unlearned samples (red) are clearly separated from the original base model's representations (cyan), forming an independent cluster distinct from the original distribution. This confirms that our method effectively removes the sensitive information from the feature space. Conversely, for baselines like AlphaEdit and MEMIT, the unlearned representations heavily overlap with the original ones, indicating that these methods fail to fundamentally alter the internal encoding of the targeted knowledge. More Visualization results are provided in Appendix ~\ref{app:PCA_results}.

\subsection{Downstream Evaluation}

To verify that \sysname{} preserves the model's general capabilities, we evaluate performance across six diverse downstream tasks (\texttt{SST}~\cite{socher2013recursive}, \texttt{MMLU}~\cite{hendrycks2020measuring},\texttt{MRPC}~\cite{dolan2005automatically}, \texttt{COLA}~\cite{warstadt2019neural}, \texttt{RTE}~\cite{bentivogli2009fifth} and \texttt{NLI}~\cite{williams2018broad}). As shown in Figure~\ref{fig:glue}, GA suffers from catastrophic forgetting, exhibiting a near-total collapse in accuracy on tasks like \texttt{SST} and \texttt{NLI}. In contrast, \sysname{} maintains performance levels statistically comparable to the original base model across all benchmarks. This confirms that our method achieves surgical unlearning without degrading the model's fundamental reasoning and linguistic competencies. 

\subsection{Practical Efficiency of \sysname{}}
We measure the practical runtime and memory usage of the few-shot closed-form update across different forget-set sizes. All experiments were conducted on Llama-3.2. The results show that the SVD step itself is very lightweight: even when the forget-set size increases from 10 to 1000, the average SVD time remains below 0.3 seconds on \texttt{MCF}/\texttt{ZsRE} and below 0.6 seconds on \texttt{MQUAKE}, while the corresponding memory usage only increases modestly from about 13.8 GB to 14.1 GB.
We also report the end-to-end cost of the full editing procedure. As expected, runtime grows approximately linearly with forget-set size, from about 0.04 h at 10 samples to 3.35–3.82 h at 1000 samples across datasets. Total memory remains stable at roughly 14.9–17.4 GB. These results suggest that the closed-form update is not the practical bottleneck; the main cost comes from key/value extraction and layer-wise editing.

%% file: sec/conclusion.tex
\section{Conclusion}
In this work, we introduced \text{\sysname{}}, a novel methodology that redefines machine unlearning as a precise knowledge remapping process. By leveraging a multiplicative parameter update mechanism, \sysname{} projects sensitive representations into an orthogonal null space, thereby ensuring effective erasure while minimizing collateral damage to the model's general utility. We further derived a closed-form solution for efficient few-shot updates and extended it to \sysnamegd{} for batch processing. Extensive empirical evaluations across multiple LLMs demonstrate that our approach significantly outperforms existing baselines, achieving a superior balance between unlearning and utility. 

%% file: sec/appendix.tex
\section{Notation}
\label{app:notation}

\begin{table*}[ht]
\centering
\small
\caption{Summary of symbols used throughout the paper. Vectors and matrices are in bold.}
\begin{tabular}{ll}
\toprule
\textbf{Symbol} & \textbf{Meaning} \\
\midrule
$\mathcal{D}_f=\{(x_i,y_i)\}_{i=1}^{n}$ & Forget set (samples whose influence should be removed). \\
$f_{\theta}$, $\theta\in\Theta$ & Pre-trained language model parameterized by $\theta$. \\
$\mathcal{U}(\cdot)$ & Unlearning operator; $\theta'=\mathcal{U}(\theta,\mathcal{D}_f)$. \\
$\theta'$, $\Delta\theta$ & Updated parameters and parameter change ($\Delta\theta=\theta'-\theta$). \\

$L$ & Number of transformer layers. \\
$l\in\{1,\dots,L\}$ & Layer index. \\
$x$ & Input token (or token sequence). \\
$\mathbf{h}^{\,l-1},\mathbf{h}^{\,l}$ & Hidden state at layers $l-1$ and $l$. \\
$\mathbf{a}^{\,l}$ & Causal self-attention output at layer $l$. \\
$\mathbf{m}^{\,l}$ (or $\mathbf{m}$) & MLP output at layer $l$ (often omitting layer superscript). \\
\textsf{Norm}$(\cdot)$, $\sigma(\cdot)$ & Layer normalization and activation function. \\
$\mathbf{W}_{\mathrm{up}},\mathbf{W}_{\mathrm{down}}$ & MLP expansion/projection matrices; $\mathbf{W}_{\mathrm{down}}$ maps MLP features to $\mathbf{m}$. \\
$\mathbf{W}$ & Shorthand for the edited weight matrix (typically $\mathbf{W}_{\mathrm{down}}$ at a chosen layer). \\
$\mathbf{k}$ & MLP feature vector (pre-$\mathbf{W}$), e.g., $\mathbf{k}=\sigma(\mathbf{W}_{\mathrm{in}}\textsf{Norm}(\cdot))$ so that $\mathbf{m}=\mathbf{W}\mathbf{k}$. \\

$(s,r,o)$ & Knowledge triple: subject, relation, object. \\
$k(\cdot)$ & Function extracting the key vector for a knowledge instance (e.g., last-token feature of the subject). \\
$\mathcal{E}_f=\{(s_i^f,r_i^f,o_i^f)\}$ & Forget knowledge set in triple form. \\
$\mathcal{E}_0=\{t_i^0\}$ & Utility/anchor set used to preserve general behavior. \\
$\mathbf{K}_f, \mathbf{M}_f$ & Stacked forget keys/outputs with columns $\{\mathbf{k}_i^f\}$ and $\{\mathbf{m}_i^f\}$. \\
$\mathbf{K}_0, \mathbf{M}_0$ & Stacked utility keys/outputs with columns $\{\mathbf{k}_i^0\}$ and $\{\mathbf{m}_i^0\}$. \\
$\mathbf{M}_n$ & Target (nullifying) state/representation (e.g., an \texttt{<EOS>} terminal state). \\

$\tilde{\mathbf{W}}$ & Updated weight matrix after editing. \\
$\|\cdot\|_2$ & Euclidean / Frobenius norm (context-dependent). \\
$\mathbf{D}$ & Multiplicative left-update matrix with $\tilde{\mathbf{W}}=\mathbf{D}\mathbf{W}$. \\
$\mathbf{P}$ & Orthogonal projector onto the right null space of $\mathbf{M}_f^\top$ (to enforce orthogonality). \\
$\mathbf{M}_f^\top=\mathbf{U}\mathbf{\Sigma}\mathbf{V}^\top$ & SVD of $\mathbf{M}_f^\top$; $\mathbf{U},\mathbf{\Sigma},\mathbf{V}$ are SVD factors. \\
$\mathbf{P}=\mathbf{I}-\mathbf{V}\mathbf{V}^\top$ & Null-space projector constructed from $\mathbf{V}$. \\
$\tilde{\mathbf{D}}$ & Re-parameterized update with $\mathbf{D}=\mathbf{P}\tilde{\mathbf{D}}$. \\

$r$ & Rank of $\mathbf{M}_f^\top$ (typically $r\le n$). \\
$d$ & Hidden/MLP feature dimension. \\

$\mathbf{D}_m$ & Additive update matrix for multi-sample variant, $\tilde{\mathbf{W}}=\mathbf{W}+\mathbf{D}_m$. \\
$\mathbf{P}_m$ & Projector onto the right null space of $\mathbf{K}_0^\top$ (to satisfy $\mathbf{D}_m\mathbf{K}_0=\mathbf{0}$). \\
$\mathrm{vec}(\cdot)$, $\otimes$ & Vectorization operator and Kronecker product. \\
$\mathbf{I}$ & Identity matrix (dimension implied by context). \\
$\mathbf{0}$ & All-zeros vector or matrix (dimension implied by context). \\
\bottomrule
\end{tabular}

\label{tab:notation}
\end{table*}

\section{Baselines Details}
\label{appendix:baselines}

In this section, we provide detailed descriptions of the baseline methods employed in our comparative evaluation. These baselines encompass both optimization-based unlearning approaches and state-of-the-art knowledge editing techniques.

\textbf{GA (Gradient Ascent)~\cite{jang2023knowledge}.}
GA is a standard baseline for machine unlearning that operates by reversing the standard training objective. Instead of minimizing the prediction error, GA maximizes the loss function on the forget set $\mathcal{D}_f$. While effective at erasing specific data traces, this unconstrained maximization often leads to the destruction of the model's language modeling capabilities and catastrophic forgetting of unrelated knowledge.

\textbf{FT (Fine-Tuning)~\cite{zhu2020modifying}.}
In the context of knowledge editing, FT serves as a naive baseline where the model parameters are updated via standard gradient descent to map the specific input $\mathbf{k}$ to a new target output $\mathbf{v}$ (e.g., the target token or an empty response). 

\textbf{ROME (Rank-One Model Editing)~\cite{meng2022locating}.}
ROME treats the feed-forward networks in transformer models as key-value associative memories. It first utilizes \text{causal tracing} to locate the specific layer and neuron responsible for a factual association. Subsequently, it computes a rank-one update to the FFN weights. This update is explicitly designed to force the modified layer to map the subject representation to the desired target vector, while simultaneously minimizing interference with other memories stored in the model.

\textbf{MEMIT (Mass-Editing Memory in a Transformer)~\cite{meng2022mass}.}
MEMIT extends the principles of ROME to the multi-edit setting. It distributes the knowledge update across multiple layers of the Transformer to increase capacity. Mathematically, MEMIT formulates the batch editing problem as a least-squares optimization with an equality constraint for the new memories. It aggregates the update directions from thousands of samples and applies a closed-form solution to inject large batches of knowledge simultaneously. 

\textbf{AlphaEdit~\cite{fang2024alphaedit}.}
AlphaEdit is a recently proposed improvement over ROME and MEMIT that addresses the ``over-correction'' issue in projection-based editing. It introduces a null-space constraint mechanisms during the covariance statistics accumulation phase. By strictly constraining the update direction to be orthogonal to the preserved knowledge subspace, AlphaEdit achieves higher specificity and stability, effectively minimizing the side effects on neighborhood knowledge compared to standard least-squares approaches.

\section{Dataset Details}
\label{appendix:dataset_details}

To comprehensively evaluate the performance of our proposed method, we utilize two distinct categories of datasets. The first category focuses on assessing the efficacy of \text{knowledge unlearning}. The second category consists of standard benchmarks to evaluate the model's \text{general utility}, ensuring that the unlearning process does not compromise the model's fundamental reasoning and linguistic capabilities.

\subsection{Benchmarks for Knowledge Unlearning}
We select three widely used datasets to construct the forget set ($\mathcal{D}_f$). For each dataset, we partition the data to evaluate the trade-off between forgetting specific facts and retaining relevant knowledge.

\begin{itemize}
    \item \textbf{\texttt{MCF} (Multi-CounterFact)} \cite{meng2022locating}: A large-scale dataset designed to evaluate counterfactual knowledge editing. It contains diverse factual statements that allow us to test the model's ability to update or erase specific associations.
    
\item \textbf{\texttt{ZsRE} (Zero-Shot Relation Extraction)} \cite{levy2017zeroshotrelationextractionreading}: This dataset is derived from Question-Answering tasks and is a standard benchmark for measuring model editing performance. It was originally proposed for relation extraction~\cite{ma2026hcre}. It requires the model to answer questions based on specific relations, providing a robust test for precise unlearning.
    
    \item \textbf{\texttt{MQUAKE}} \cite{zhong2024mquakeassessingknowledgeediting}: Although originally designed for assessing multi-hop knowledge editing, we adapt this dataset to a single-hop setting analogous to \texttt{ZsRE}. Specifically, we decompose the multi-hop reasoning chains into atomic, single-hop question-answer pairs. This processing strategy allows us to strictly evaluate the unlearning efficacy on diverse factual relations without the confounding factors of multi-hop reasoning, serving as a robust complement to \texttt{ZsRE} for testing direct fact erasure.
\end{itemize}

\subsection{Benchmarks for General Utility}
To ensure that our method maintains the general capabilities of the LLM, we evaluate the model on six diverse tasks covering sentiment analysis, semantic matching, and logical reasoning. These include \text{MMLU} and selected tasks from the \text{GLUE} benchmark \cite{wang2019gluemultitaskbenchmarkanalysis}.

\begin{itemize}
    \item \textbf{\texttt{MMLU} (Massive Multi-task Language Understanding)} \cite{hendrycks2021measuringmassivemultitasklanguage}: A comprehensive evaluation suite designed to measure multi-task accuracy. It assesses the model's broad knowledge and reasoning ability under zero-shot and few-shot settings across various domains.
    
    \item \textbf{\texttt{SST} (The Stanford Sentiment Treebank)} \cite{socher-etal-2013-recursive}: A single-sentence classification task involving movie reviews. It evaluates the model's ability to identify and classify sentiment labels correctly.
    
    \item \textbf{\texttt{MRPC} (Microsoft Research Paraphrase Corpus)} \cite{dolan2005automatically}: A well-known benchmark for text matching and semantic similarity assessment. The objective is to determine whether a given pair of sentences is semantically equivalent.
    
    \item \textbf{\texttt{RTE} (Recognizing Textual Entailment)} \cite{bentivogli2009fifth}: This task involves natural language inference, requiring the model to determine if a premise sentence logically entails a hypothesis sentence.
    
    \item \textbf{\texttt{COLA} (Corpus of Linguistic Acceptability)} \cite{warstadt2019neural}: A single-sentence classification task where sentences are annotated as either grammatically acceptable or unacceptable, testing the model's linguistic competence.
    
    \item \textbf{NLI (Natural Language Inference)} \cite{williams2018broad}: This task focuses on natural language understanding, requiring the model to infer the logical relationship (entailment, contradiction, or neutral) between pairs of sentences.
\end{itemize}

\section{Proof of Lemmas}
\label{sec:appendix_proofs}

In this section, we provide detailed mathematical derivations for the closed-form solutions presented in Lemma~\ref{lemma2} and Lemma~\ref{lemma:additive_solution}. We utilize standard matrix calculus notation, where the Frobenius norm is defined as $\|\mathbf{X}\|_F^2 = \operatorname{Tr}(\mathbf{X}^{\top}\mathbf{X})$.

\subsection{Proof of Lemma~\ref{lemma2} (\sysname{})}
\begin{proof}
Recall the optimization objective for \sysname{}. We seek the optimal transformation $\tilde{\mathbf{D}}$ that minimizes the reconstruction errors in the projected subspace defined by $\mathbf{P}$, with regularization on the weight changes:
\begin{equation}
\mathcal{L}(\tilde{\mathbf{D}}) = 
\left\| \mathbf{P}\tilde{\mathbf{D}}\mathbf{W}\mathbf{K}_f - \mathbf{M}_n \right\|_F^2 + 
\left\| \mathbf{P}\tilde{\mathbf{D}}\mathbf{W}\mathbf{K}_0 - \mathbf{M}_0 \right\|_F^2 + 
\left\| \tilde{\mathbf{D}}\mathbf{W} - \mathbf{W} \right\|_F^2.
\end{equation}
Since $\mathbf{P}$ projects onto the null space of $\mathbf{M}_f^\top$, the ``Zero Term'' $\left\| \mathbf{M}_f^\top (\mathbf{P}\tilde{\mathbf{D}}\mathbf{W}\mathbf{K}_f) \right\|^2$ vanishes by design ($\mathbf{M}_f^\top \mathbf{P} = \mathbf{0}$) and is omitted from the derivative calculation for brevity. We assume $\mathbf{P}$ is an orthogonal projection matrix, satisfying $\mathbf{P}^\top = \mathbf{P}$ and $\mathbf{P}^2 = \mathbf{P}$.

We compute the gradient of $\mathcal{L}$ with respect to $\tilde{\mathbf{D}}$:
\begin{equation}
\begin{aligned}
\frac{1}{2}\nabla_{\tilde{\mathbf{D}}}\mathcal{L} =& 
\mathbf{P}^\top (\mathbf{P}\tilde{\mathbf{D}}\mathbf{W}\mathbf{K}_f - \mathbf{M}_n)(\mathbf{W}\mathbf{K}_f)^{\top} \\
&+ \mathbf{P}^\top (\mathbf{P}\tilde{\mathbf{D}}\mathbf{W}\mathbf{K}_0 - \mathbf{M}_0)(\mathbf{W}\mathbf{K}_0)^{\top} \\
&+ (\tilde{\mathbf{D}}\mathbf{W} - \mathbf{W})\mathbf{W}^{\top}.
\end{aligned}
\end{equation}
Using the property $\mathbf{P}^\top = \mathbf{P}$ and $\mathbf{P}^\top \mathbf{P} = \mathbf{P}$, the terms simplify. The stationarity condition $\nabla_{\tilde{\mathbf{D}}} \mathcal{L} = \mathbf{0}$ becomes:
\begin{equation}
\mathbf{P}(\tilde{\mathbf{D}}\mathbf{W}\mathbf{K}_f - \mathbf{M}_n)(\mathbf{W}\mathbf{K}_f)^{\top} + \mathbf{P}(\tilde{\mathbf{D}}\mathbf{W}\mathbf{K}_0 - \mathbf{M}_0)(\mathbf{W}\mathbf{K}_0)^{\top} + (\tilde{\mathbf{D}}\mathbf{W} - \mathbf{W})\mathbf{W}^{\top} = \mathbf{0}.
\end{equation}
To ensure the solution $\tilde{\mathbf{D}}$ lies within the valid subspace (Range of $\mathbf{P}$) and satisfies the optimality condition for the constrained problem (Projected Gradient = $\mathbf{0}$), we left-multiply the entire equation by $\mathbf{P}$. This projects the regularization gradient onto the valid subspace:
\begin{equation}
\mathbf{P}^2 (\dots) + \mathbf{P}^2 (\dots) + \mathbf{P}(\tilde{\mathbf{D}}\mathbf{W} - \mathbf{W})\mathbf{W}^{\top} = \mathbf{0}.
\end{equation}
Since $\mathbf{P}^2 = \mathbf{P}$, we can factor $\mathbf{P}$ out of the entire expression:
\begin{equation}
\label{eq:projected_gradient}
\mathbf{P} \left[ 
(\tilde{\mathbf{D}}\mathbf{W}\mathbf{K}_f - \mathbf{M}_n)(\mathbf{W}\mathbf{K}_f)^{\top} + 
(\tilde{\mathbf{D}}\mathbf{W}\mathbf{K}_0 - \mathbf{M}_0)(\mathbf{W}\mathbf{K}_0)^{\top} + 
(\tilde{\mathbf{D}}\mathbf{W} - \mathbf{W})\mathbf{W}^{\top}
\right] = \mathbf{0}.
\end{equation}
We construct the solution by first solving for the unconstrained optimizer of the expression inside the brackets, and then projecting it. Setting the term inside the brackets to zero:
\begin{equation}
\tilde{\mathbf{D}}\mathbf{W} (\mathbf{K}_f\mathbf{K}_f^{\top} + \mathbf{K}_0\mathbf{K}_0^{\top} + \mathbf{I})\mathbf{W}^{\top} = 
(\mathbf{M}_n\mathbf{K}_f^{\top} + \mathbf{M}_0\mathbf{K}_0^{\top} + \mathbf{W})\mathbf{W}^{\top}.
\end{equation}
Let $\mathbf{A} = \mathbf{M}_n\mathbf{K}_f^{\top} + \mathbf{M}_0\mathbf{K}_0^{\top}$ and $\mathbf{B} = \mathbf{K}_f\mathbf{K}_f^{\top} + \mathbf{K}_0\mathbf{K}_0^{\top}$. The equation simplifies to:
\begin{equation}
\tilde{\mathbf{D}} \left( \mathbf{W}(\mathbf{B} + \mathbf{I})\mathbf{W}^{\top} \right) = (\mathbf{A} + \mathbf{W})\mathbf{W}^{\top}.
\end{equation}
Solving for $\tilde{\mathbf{D}}$ yields a base solution. To satisfy the constraint $\tilde{\mathbf{D}} \in \text{Range}(\mathbf{P})$, we apply the projection $\mathbf{P}$ to this base solution. It can be verified that $\tilde{\mathbf{D}}^* = \mathbf{P} \tilde{\mathbf{D}}$ satisfies the original projected gradient equation (Eq.~\ref{eq:projected_gradient}):
\begin{equation}
\tilde{\mathbf{D}}^* = \mathbf{P} (\mathbf{A} + \mathbf{W})\mathbf{W}^{\top} \left( \mathbf{W}(\mathbf{B} + \mathbf{I})\mathbf{W}^{\top} \right)^{-1}.
\end{equation}
\end{proof}

\subsection{Proof of Lemma~\ref{lemma:additive_solution}}
\begin{proof}

We start with the optimization objective for multi-sample unlearning (Eq.~\ref{eq:multi ul}). Since $\mathbf{P}_m\mathbf{K}_0 = \mathbf{0}$, the utility term vanishes. The optimization objective is given by:
\begin{equation}
\mathcal{L}(\tilde{\mathbf{D}}_m) = 
\left\| \mathbf{M}_f^{\top} (\mathbf{W} + \tilde{\mathbf{D}}_m\mathbf{P}_m)\mathbf{K}_f \right\|_F^2 + 
\left\| (\mathbf{W} + \tilde{\mathbf{D}}_m\mathbf{P}_m)\mathbf{K}_f - \mathbf{M}_n \right\|_F^2 + 
\left\| \tilde{\mathbf{D}}_m\mathbf{P}_m \right\|_F^2.
\end{equation}
Since the objective function is convex with respect to $\tilde{\mathbf{D}}_m$, we derive the optimal solution by setting the gradient $\nabla_{\tilde{\mathbf{D}}_m}\mathcal{L}$ to zero. We utilize the matrix derivative identity $\frac{\partial \|\mathbf{X}\mathbf{Y} + \mathbf{Z}\|_F^2}{\partial \mathbf{X}} = 2(\mathbf{X}\mathbf{Y} + \mathbf{Z})\mathbf{Y}^{\top}$ and $\frac{\partial \|\mathbf{A}\mathbf{X}\mathbf{B}\|_F^2}{\partial \mathbf{X}} = 2\mathbf{A}^{\top}\mathbf{A}\mathbf{X}\mathbf{B}\mathbf{B}^{\top}$.

Differentiating $\mathcal{L}(\tilde{\mathbf{D}}_m)$ with respect to $\tilde{\mathbf{D}}_m$ yields:
\begin{align}
\frac{\partial \mathcal{L}}{\partial \tilde{\mathbf{D}}_m} &= 2 \mathbf{M}_f \mathbf{M}_f^{\top} (\mathbf{W} + \tilde{\mathbf{D}}_m\mathbf{P}_m)\mathbf{K}_f \mathbf{K}_f^{\top} \mathbf{P}_m^{\top} \nonumber \\
&\quad + 2 (\mathbf{W} + \tilde{\mathbf{D}}_m\mathbf{P}_m)\mathbf{K}_f \mathbf{K}_f^{\top} \mathbf{P}_m^{\top} - 2 \mathbf{M}_n \mathbf{K}_f^{\top} \mathbf{P}_m^{\top} \nonumber \\
&\quad + 2 \tilde{\mathbf{D}}_m \mathbf{P}_m \mathbf{P}_m^{\top}.
\end{align}
Setting the gradient to zero and dividing by 2, we rearrange the terms to isolate $\tilde{\mathbf{D}}_m$. We group the terms involving $\mathbf{W}$ and move them to the right-hand side, while keeping terms with $\tilde{\mathbf{D}}_m$ on the left:
\begin{equation}
(\mathbf{M}_f \mathbf{M}_f^{\top} + \mathbf{I}) \tilde{\mathbf{D}}_m \mathbf{P}_m \mathbf{K}_f \mathbf{K}_f^{\top} \mathbf{P}_m^{\top} + \tilde{\mathbf{D}}_m \mathbf{P}_m \mathbf{P}_m^{\top} = 
\mathbf{M}_n \mathbf{K}_f^{\top} \mathbf{P}_m^{\top} - (\mathbf{M}_f \mathbf{M}_f^{\top} + \mathbf{I}) \mathbf{W} \mathbf{K}_f \mathbf{K}_f^{\top} \mathbf{P}_m^{\top}.
\end{equation}
To simplify the notation, we define the following auxiliary matrices $\mathbf{Q}, \mathbf{H}, \mathbf{C}$, and the constant matrix $\mathbf{Z}$:
\begin{align}
\mathbf{Q} &= \mathbf{M}_f \mathbf{M}_f^{\top} + \mathbf{I}, \\
\mathbf{H} &= \mathbf{P}_m \mathbf{K}_f \mathbf{K}_f^{\top} \mathbf{P}_m^{\top}, \\
\mathbf{C} &= \mathbf{P}_m \mathbf{P}_m^{\top}, \\
\mathbf{Z} &= \mathbf{M}_n \mathbf{K}_f^{\top} \mathbf{P}_m^{\top} - \mathbf{Q} \mathbf{W} \mathbf{K}_f \mathbf{K}_f^{\top} \mathbf{P}_m^{\top}.
\end{align}
Substituting these into the gradient equation, the optimization condition reduces to a generalized Sylvester equation:
\begin{equation}
\mathbf{Q} \tilde{\mathbf{D}}_m \mathbf{H} + \tilde{\mathbf{D}}_m \mathbf{C} = \mathbf{Z}.
\end{equation}
To solve for $\tilde{\mathbf{D}}_m$, we apply the vectorization operator $\text{vec}(\cdot)$ and use the property of the Kronecker product $\text{vec}(\mathbf{A}\mathbf{X}\mathbf{B}) = (\mathbf{B}^{\top} \otimes \mathbf{A})\text{vec}(\mathbf{X})$. This transforms the matrix equation into the following linear system:
\begin{equation}
(\mathbf{H}^{\top} \otimes \mathbf{Q}) \text{vec}(\tilde{\mathbf{D}}_m) + (\mathbf{C}^{\top} \otimes \mathbf{I}) \text{vec}(\tilde{\mathbf{D}}_m) = \text{vec}(\mathbf{Z}).
\end{equation}
Merging the terms acting on $\text{vec}(\tilde{\mathbf{D}}_m)$, we obtain the closed-form solution:
\begin{equation}
\text{vec}(\tilde{\mathbf{D}}_m^*) = \left( \mathbf{H}^{\top} \otimes \mathbf{Q} + \mathbf{C}^{\top} \otimes \mathbf{I} \right)^{-1} \text{vec}(\mathbf{Z}).
\end{equation}
\end{proof}

\clearpage
\section{Details for Layer Identification}
\label{app:trace}

Figure~\ref{fig:trace-line} illustrates the variation in the average indirect effect (AIE) for each token across all layers. We observe that for MLP outputs, the layers where the last subject token exhibits the peak AIE are often concentrated in the model's early (bottom) layers. However, our experiments reveal that editing these lower layers significantly compromises the model's general capabilities. In practice, specifically for Llama-3.1 and Llama-3.2, where the peak AIE of the last subject token is located in the bottom layers, we target the layers where the last token achieves its maximum AIE, as these are concentrated in the middle section of the model.
\begin{figure*}[!h]
    \centering
    \includegraphics[width=\linewidth]{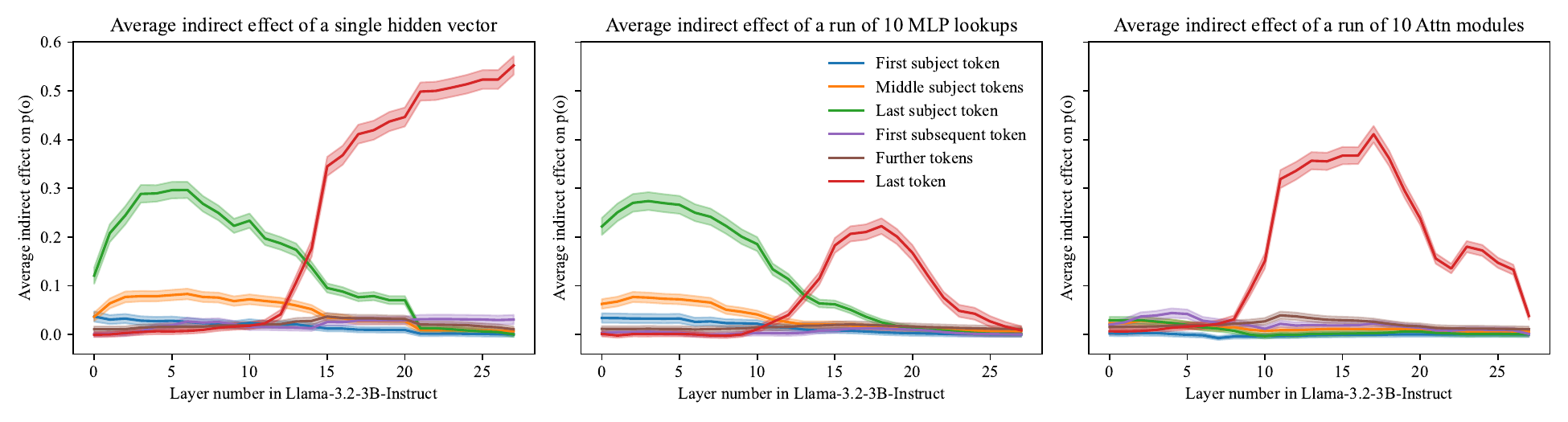}
    \caption{\textbf{Component-wise causal tracing analysis on Llama-3.2.} The line plots visualize the Average Indirect Effect (AIE) of single hidden vectors (left), MLP lookups (middle), and Attention modules (right) across layers. The results highlight distinct roles: MLP layers in the early stages (layers 0-10) show high causal influence on the \textit{last subject token} (green line), suggesting knowledge retrieval, whereas the \textit{last token} (red line) dominates the causal effect in later layers, particularly in attention modules.}
    \label{fig:trace-line}
\end{figure*}

To further dissect the internal information flow across different architectures, we visualize the Average Indirect Effect of Attention modules and hidden states in Figure~\ref{fig:trace-attn} and Figure~\ref{fig:trace-hidden}, respectively.

\begin{figure*}[ht]
    \centering
    \includegraphics[width=\linewidth]{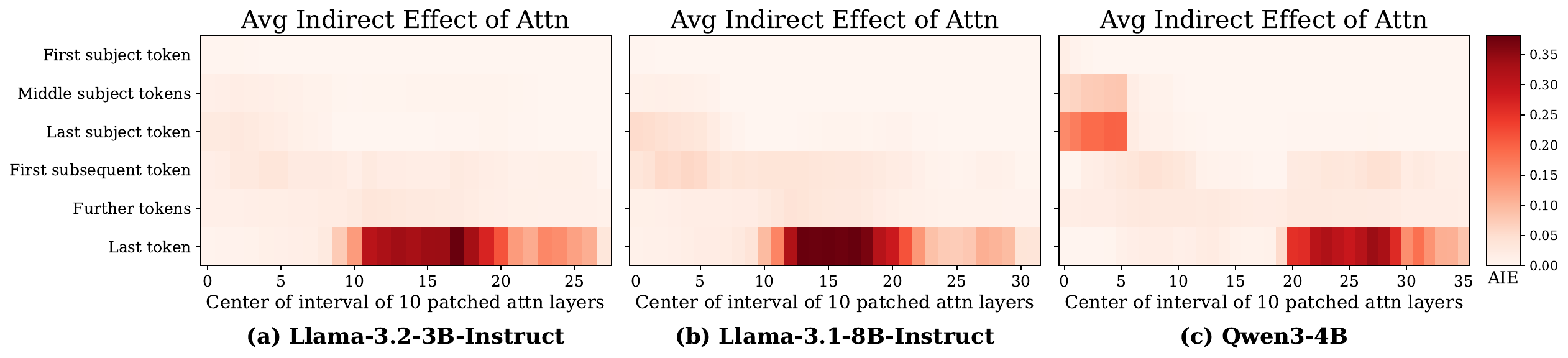}
    \caption{Average Indirect Effect of Attention modules across different architectures.}
    \label{fig:trace-attn}
\end{figure*}

\begin{figure*}[ht]
    \centering
\includegraphics[width=\linewidth]{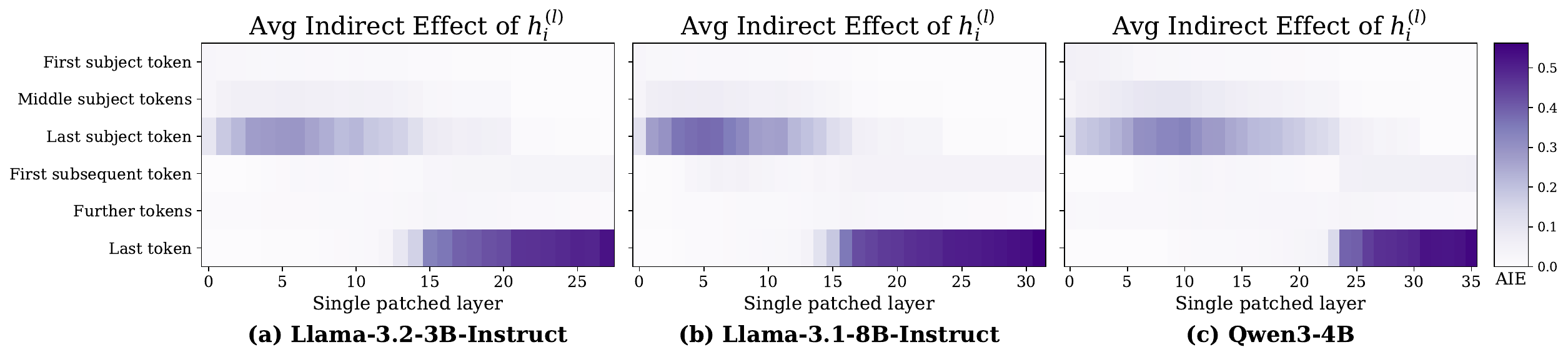}
    \caption{Layer-wise causal efficacy of hidden states ($h_i^{(l)}$).}
    \label{fig:trace-hidden}
\end{figure*}
\clearpage

\subsection{Ablation Study}

To evaluate the contribution of the core components in \sysname{}, we conduct an ablation study focusing on the role of the neutral target state $\mathbf{M}_n$. We compare the full \sysname{} method against a variant denoted as ``w/o $\mathbf{M}_n$'', where the optimization relies solely on the null-space projection constraint without explicitly guiding the output towards a neutral value. The results on the \texttt{ZsRE} dataset across Llama-3.2, Llama-3.1, and Qwen-3 are summarized in Table~\ref{tab:ablation}.

\paragraph{Impact of the Neutral Target State ($\mathbf{M}_n$).}
As shown in Table~\ref{tab:ablation}, incorporating $\mathbf{M}_n$ yields a consistent improvement in unlearning performance across all model architectures.
The inclusion of the target state significantly reduces the Efficacy (Eff.) and Generalization (Gen.) scores (where lower values indicate better unlearning). For instance, on Llama-3.1, the Efficacy score improves from $36.02$ to $32.67$. This suggests that merely projecting the weights into the null space of the sensitive knowledge is insufficient for complete erasure. The $\mathbf{M}_n$ term acts as a directional guide, actively steering the model's behavior towards a neutral state , thereby ensuring a more thorough removal of the targeted associative mapping.

\begin{table*}[ht]
\small
\centering
\caption{Ablation results of \sysname{} on \texttt{ZsRE} dataset.}
\label{tab:ablation}
\setlength{\tabcolsep}{6pt}
\renewcommand{\arraystretch}{1.05}

\begin{tabular}{l >{\centering\arraybackslash}p{15mm} cccc}
\toprule
\multirow{2}{*}{\textbf{Method}} & \multirow{2}{*}{\textbf{Model}} 
& \multicolumn{4}{c}{\textbf{\texttt{ZsRE}}} \\
\cmidrule(lr){3-6}
 & 
 & \textbf{Eff.}$\downarrow$ & \textbf{Gen.}$\downarrow$ & \textbf{Spe.}$\uparrow$ & \textbf{PPL}$\downarrow$ \\
\midrule
\textbf{\sysname{}} & Llama-3.2
& 27.85$_{\pm3.87}$ & 31.25$_{\pm0.49}$ & 27.73$_{\pm2.70}$ & 13.08$_{\pm0.06}$ \\

w/o $\mathbf{M}_n$ & 
& 30.03$_{\pm3.59}$ & 32.30$_{\pm0.65}$ & 27.44$_{\pm2.18}$ & 13.04$_{\pm0.10}$ \\


\textbf{\sysname{}} & Llama-3.1
& 32.67$_{\pm3.43}$ & 37.09$_{\pm1.07}$ & 29.67$_{\pm2.36}$ & 7.76$_{\pm0.10}$ \\

w/o $\mathbf{M}_n$ & 
& 36.02$_{\pm4.23}$ & 39.16$_{\pm0.86}$ & 29.41$_{\pm2.48}$ & 7.76$_{\pm0.05}$ \\


\textbf{\sysname{}} & Qwen-3
& 25.35$_{\pm2.80}$ & 29.22$_{\pm0.68}$ & 27.28$_{\pm3.13}$ & 10.96$_{\pm0.24}$ \\

w/o $\mathbf{M}_n$ & 
& 25.64$_{\pm3.30}$ & 29.10$_{\pm0.63}$ & 27.00$_{\pm3.39}$ & 10.88$_{\pm0.19}$ \\

\bottomrule
\end{tabular}
\end{table*}

\section{Complete Results}\label{app:complete_results}
\subsection{Few-shot unlearning results of Qwen-3}

\begin{table*}[ht]
\small
\centering
\caption{Few-shot unlearning results of \sysname{} on \texttt{MCF} dataset.}
\label{tab:results_mcf_qwen3}
\setlength{\tabcolsep}{6pt}
\renewcommand{\arraystretch}{1.05}

\begin{tabular}{l >{\centering\arraybackslash}p{6.5mm} cccc}
\toprule
\multirow{2}{*}{\textbf{Method}} & \multirow{2}{*}{\textbf{Model}} 
& \multicolumn{4}{c}{\textbf{\texttt{MCF}}} \\
\cmidrule(lr){3-6}
 & 
 & \textbf{Eff.}$\downarrow$ & \textbf{Gen.}$\downarrow$ & \textbf{Spe.}$\uparrow$ & \textbf{PPL}$\downarrow$ \\
\midrule
Base & \multirow{7}{*}{\rotatebox{90}{Qwen-3}}  
& 14.40$_{\pm4.45}$ & 14.10$_{\pm3.18}$ & 12.84$_{\pm2.21}$ & 11.21 \\
\cmidrule(r){1-1}
GA & 
& 0.80$_{\pm1.83}$ & 0.50$_{\pm1.50}$ & 0.58$_{\pm1.74}$ & $>$1000 \\
FT & 
& 0.00$_{\pm0.00}$ & 2.90$_{\pm1.97}$ & 6.54$_{\pm2.21}$ & 12.49$_{\pm0.70}$ \\
ROME & 
& 14.20$_{\pm3.94}$ & 13.90$_{\pm3.11}$ & 13.10$_{\pm2.25}$ & 11.31$_{\pm0.20}$ \\
MEMIT & 
& 0.80$_{\pm1.33}$ & 3.80$_{\pm2.04}$ & 12.82$_{\pm2.09}$ & 11.23$_{\pm0.06}$ \\
AlphaEdit & 
& 0.60$_{\pm0.91}$ & 2.60$_{\pm1.74}$ & 12.52$_{\pm2.01}$ & 11.24$_{\pm0.14}$ \\
\rowcolor{gray!15}\textbf{\sysname{}} & 
& 0.00$_{\pm0.00}$ & 2.20$_{\pm1.60}$ & 9.38$_{\pm1.69}$ & 11.19$_{\pm0.26}$ \\

\bottomrule
\end{tabular}
\end{table*}

\begin{table*}[ht]
\small
\centering
\caption{Few-shot unlearning results of \sysname{} on \texttt{ZsRE} dataset.}
\label{tab:results_zsre_qwen3}
\setlength{\tabcolsep}{6pt}
\renewcommand{\arraystretch}{1.05}

\begin{tabular}{l >{\centering\arraybackslash}p{6.5mm} cccc}
\toprule
\multirow{2}{*}{\textbf{Method}} & \multirow{2}{*}{\textbf{Model}} 
& \multicolumn{4}{c}{\textbf{\texttt{ZsRE}}} \\
\cmidrule(lr){3-6}
 & 
 & \textbf{Eff.}$\downarrow$ & \textbf{Gen.}$\downarrow$ & \textbf{Spe.}$\uparrow$ & \textbf{PPL}$\downarrow$ \\
\midrule
Base & \multirow{7}{*}{\rotatebox{90}{Qwen-3}}  
& 30.63$_{\pm2.97}$ & 31.13$_{\pm3.49}$ & 27.69$_{\pm3.27}$ & 11.21 \\
\cmidrule(r){1-1}
GA & 
& 3.17$_{\pm2.05}$ & 2.81$_{\pm1.59}$ & 7.10$_{\pm1.76}$ & $>$1000 \\
FT & 
& 28.89$_{\pm4.47}$ & 27.80$_{\pm3.80}$ & 27.98$_{\pm3.26}$ & 11.97$_{\pm0.40}$ \\
ROME & 
& 30.66$_{\pm3.20}$ & 30.93$_{\pm3.23}$ & 27.70$_{\pm3.37}$ & 11.25$_{\pm0.50}$ \\
MEMIT & 
& 28.97$_{\pm3.16}$ & 29.55$_{\pm3.01}$ & 27.83$_{\pm3.21}$ & 11.20$_{\pm0.06}$ \\
AlphaEdit & 
& 28.52$_{\pm3.30}$ & 28.29$_{\pm3.60}$ & 27.82$_{\pm3.14}$ & 11.20$_{\pm0.13}$ \\
\rowcolor{gray!15}\textbf{\sysname{}} & 
& 25.35$_{\pm2.80}$ & 25.53$_{\pm3.17}$ & 27.28$_{\pm3.13}$ & 10.96$_{\pm0.24}$ \\

\bottomrule
\end{tabular}
\end{table*}

\begin{table*}[ht]
\small
\centering
\caption{Few-shot unlearning results of \sysname{} on \texttt{MQUAKE} dataset.}
\label{tab:results_mq_qwen3}
\setlength{\tabcolsep}{6pt}
\renewcommand{\arraystretch}{1.05}

\begin{tabular}{l >{\centering\arraybackslash}p{6.5mm} cc}
\toprule
\multirow{2}{*}{\textbf{Method}} & \multirow{2}{*}{\textbf{Model}} 
& \multicolumn{2}{c}{\textbf{\texttt{MQUAKE}}} \\
\cmidrule(lr){3-4}
 & 
 & \textbf{Eff.}$\downarrow$ & \textbf{PPL}$\downarrow$ \\
\midrule
Base & \multirow{7}{*}{\rotatebox{90}{Qwen-3}}  
& 29.16$_{\pm3.60}$ & 11.21 \\
\cmidrule(r){1-1}
GA & 
& 1.54$_{\pm0.42}$ & 455.19$_{\pm412.34}$ \\
FT & 
& 24.91$_{\pm4.47}$ & 11.55$_{\pm0.40}$ \\
ROME & 
& 28.92$_{\pm3.59}$ & 11.28$_{\pm0.78}$ \\
MEMIT & 
& 25.55$_{\pm4.03}$ & 11.27$_{\pm0.06}$ \\
AlphaEdit & 
& 24.62$_{\pm3.58}$ & 11.29$_{\pm0.13}$ \\
\rowcolor{gray!15}\textbf{\sysname{}} & 
& 22.98$_{\pm3.64}$ & 11.29$_{\pm0.26}$ \\
\bottomrule
\end{tabular}
\end{table*}

\subsection{Multiple unlearning results of Llama-3.2 on \texttt{MQUAKE}}
\vspace{1mm}

\begin{table*}[!h]
\scriptsize
\centering
\caption{Multiple unlearning results of \sysname{} on \texttt{MQUAKE} dataset.}
\label{tab:results_mq_qwen3_mass}
\setlength{\tabcolsep}{6pt}
\renewcommand{\arraystretch}{1.05}

\begin{tabular}{l >{\centering\arraybackslash}p{6.5mm} cc}
\toprule
\multirow{2}{*}{\textbf{Method}} & \multirow{2}{*}{\textbf{Model}} 
& \multicolumn{2}{c}{\textbf{\texttt{MQUAKE}}} \\
\cmidrule(lr){3-4}
 & 
 & \textbf{Eff.}$\downarrow$ & \textbf{PPL}$\downarrow$ \\
\midrule
Base & \multirow{7}{*}{\rotatebox{90}{Llama3-2}}  
& 48.33 & 12.88 \\
\cmidrule(r){1-1}
GA & 
& 1.70 & $>$1000 \\
FT & 
& 15.56 & 14.36 \\
ROME & 
& 49.21 & 12.82 \\
MEMIT & 
& 26.66 & 12.94 \\
AlphaEdit & 
& 24.71 & 13.04 \\
\rowcolor{gray!15}\textbf{\sysnamegd{}} & 
& 24.55 & 12.87 \\

\bottomrule
\end{tabular}
\end{table*}

\subsection{Multiple unlearning results of Llama-3.1}

\begin{table*}[!h]
\small
\centering
\caption{Multiple unlearning results of Llama-3.1.}
\label{tab:multiple_results_llama3.1}
\setlength{\tabcolsep}{5.8pt}
\renewcommand{\arraystretch}{1.05}

\begin{tabular}{l >{\centering\arraybackslash}p{8mm} cccc cccc cc}
\toprule
\multirow{2}{*}{\textbf{Method}} & \multirow{2}{*}{\textbf{Model}} 
& \multicolumn{4}{c}{\textbf{\texttt{MCF}}} 
& \multicolumn{4}{c}{\textbf{\texttt{ZsRE}}}
& \multicolumn{2}{c}{\textbf{\texttt{MQuAKE}}} \\
\cmidrule(lr){3-6} \cmidrule(lr){7-10} \cmidrule(lr){11-12}
 & 
 & \textbf{Eff.}$\downarrow$ & \textbf{Gen.}$\downarrow$ & \textbf{Spe.}$\uparrow$ & \textbf{PPL}$\downarrow$
 & \textbf{Eff.}$\downarrow$ & \textbf{Gen.}$\downarrow$ & \textbf{Spe.}$\uparrow$ & \textbf{PPL}$\downarrow$
 & \textbf{Eff.}$\downarrow$ & \textbf{PPL}$\downarrow$ \\
\midrule
Base & \multirow{7}{*}{\rotatebox{90}{Llama-3.1}}  
& 27.70 & 22.50 & 23.35 & 7.48  & 40.71 & 39.12 & 31.61 & 7.48
& 64.94 & 7.48 \\
\cmidrule(r){1-1}
GA & 
& 0.00  & 0.00  & 0.00  & $>$1000  & 0.10  & 0.11  & 1.08  & $>$1000
& 0.03 & $>$1000 \\
FT & 
& 0.00  & 0.00  & 0.00  & 275.30& 20.42 & 19.70 & 21.45 & 9.92
& 14.44 & 9.02 \\
ROME & 
& 27.30 & 22.70 & 23.17 & 7.48  & 40.77 & 38.93 & 31.72 & 7.48
& 62.59 & 7.46 \\
MEMIT & 
& 5.30  & 5.55  & 19.51 & 7.70  & 34.60 & 34.24 & 31.49 & 7.63
& 29.91 & 7.35 \\
AlphaEdit & 
& 0.50  & 2.85  & 15.46 & 8.01  & 33.53 & 33.09 & 30.58 & 7.87
& 29.71 & 7.24 \\
\rowcolor{gray!15}\textbf{\sysnamegd{}} & 
& 0.00  & 2.30  & 13.31 & 7.94  & 31.44 & 31.50 & 29.60 & 8.20
& 27.49 & 8.24 \\
\bottomrule
\end{tabular}
\end{table*}

\subsection{Multiple unlearning results of Qwen-3}
\begin{table*}[!h]
\small
\centering
\caption{Multiple unlearning results of Qwen-3.}
\label{tab:multiple_results_qwen3}
\setlength{\tabcolsep}{5.8pt}
\renewcommand{\arraystretch}{1.05}

\begin{tabular}{l >{\centering\arraybackslash}p{8mm} cccc cccc cc}
\toprule
\multirow{2}{*}{\textbf{Method}} & \multirow{2}{*}{\textbf{Model}} 
& \multicolumn{4}{c}{\textbf{\texttt{MCF}}} 
& \multicolumn{4}{c}{\textbf{\texttt{ZsRE}}}
& \multicolumn{2}{c}{\textbf{\texttt{MQuAKE}}} \\
\cmidrule(lr){3-6} \cmidrule(lr){7-10} \cmidrule(lr){11-12}
 & 
 & \textbf{Eff.}$\downarrow$ & \textbf{Gen.}$\uparrow$ & \textbf{Spe.}$\uparrow$ & \textbf{PPL}$\downarrow$
 & \textbf{Eff.}$\downarrow$ & \textbf{Gen.}$\uparrow$ & \textbf{Spe.}$\uparrow$ & \textbf{PPL}$\downarrow$
 & \textbf{Eff.}$\downarrow$ & \textbf{PPL}$\downarrow$ \\
\midrule
Base & \multirow{7}{*}{\rotatebox{90}{Qwen-3}}  
& 14.90 & 12.85 & 13.30 & 11.21  & 31.44 & 31.08  & 28.98 & 11.21 & 26.75 & 11.21 \\
\cmidrule(r){1-1}
GA & 
& 0.00 & 0.00 & 0.00 & $>$1000    & 0.09  & 0.08  & 0.53  & $>$1000 & 0.34 & $>$1000 \\
FT & 
& 0.00 & 0.00 & 0.00 & 714.52  & 24.68 & 23.90 & 27.04 & 12.48 & 18.08 & 12.27 \\
ROME & 
& 14.60 & 13.15 & 13.41 & 11.38 & 31.50 & 31.25 & 29.14 & 11.25 & 26.70 & 11.16 \\
MEMIT & 
& 0.70 & 4.50 & 11.35 & 11.54 & 30.33 & 30.07 & 28.62 & 11.29 & 23.14 & 11.63 \\
AlphaEdit & 
& 0.40 & 3.30 & 11.04 & 13.03 & 28.55 & 28.89 & 28.07 & 10.90 & 23.21 & 12.94 \\
\rowcolor{gray!15}\textbf{\sysnamegd{}} & 
& 0.40 & 3.00 & 9.52 & 11.63  & 28.04 & 27.73 & 27.26 & 11.71 & 23.37 & 11.57 \\

\bottomrule
\end{tabular}
\end{table*}
\clearpage


\section{Complete PCA Visualization}\label{app:PCA_results}
For completeness, we provide the full PCA visualizations of the MLP representation shifts at the critical editing layers for all evaluated models. Figure~\ref{appendix:llama3.1-pca}, Figure~\ref{appendix:llama3.2-pca}, and Figure~\ref{appendix:qwen-pca} illustrate the distinct geometric changes in Llama-3.1, Llama-3.2, and Qwen-3, respectively.
\begin{figure*}[ht]
    \centering
\includegraphics[width=0.92\linewidth]{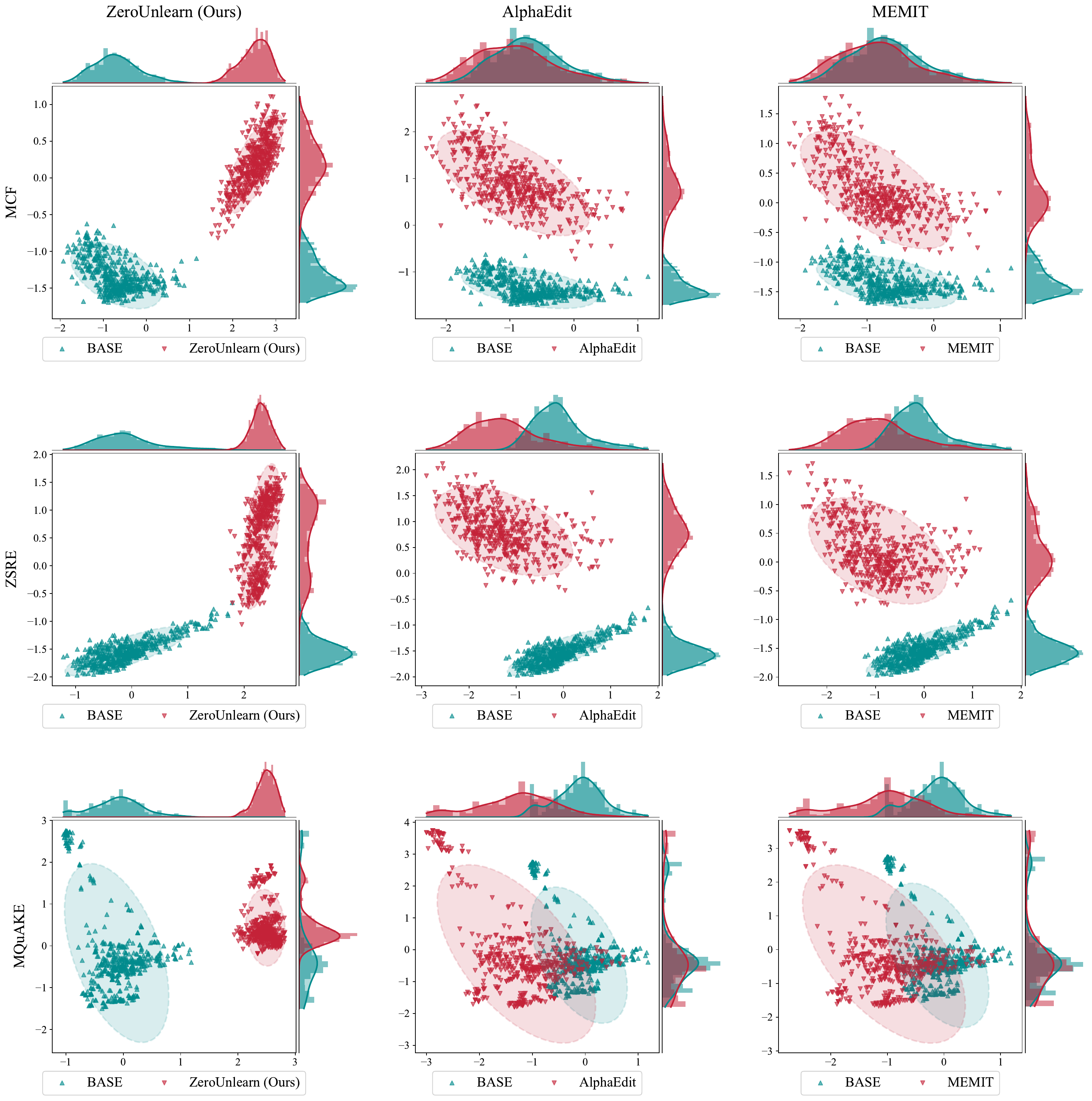}
    \caption{PCA visualization of MLP representation shifts at Layer 19 of Llama-3.1.}
    \label{appendix:llama3.1-pca}
\end{figure*}

\begin{figure*}[ht]
    \centering
\includegraphics[width=0.92\linewidth]{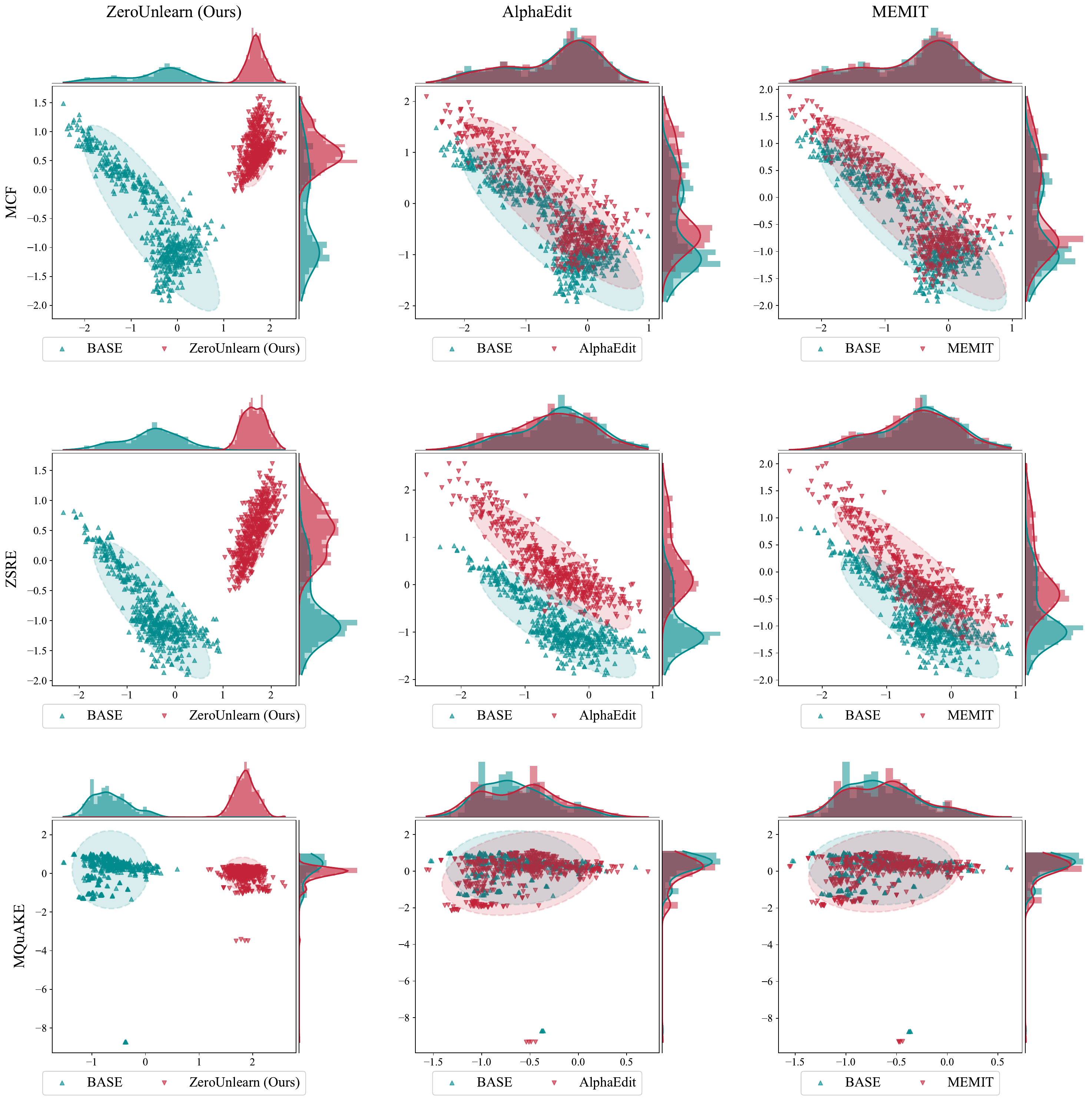}
    \caption{PCA visualization of MLP representation shifts at Layer 16 of Llama-3.2.}
    \label{appendix:llama3.2-pca}
\end{figure*}

\begin{figure*}[ht]
    \centering
\includegraphics[width=0.92\linewidth]{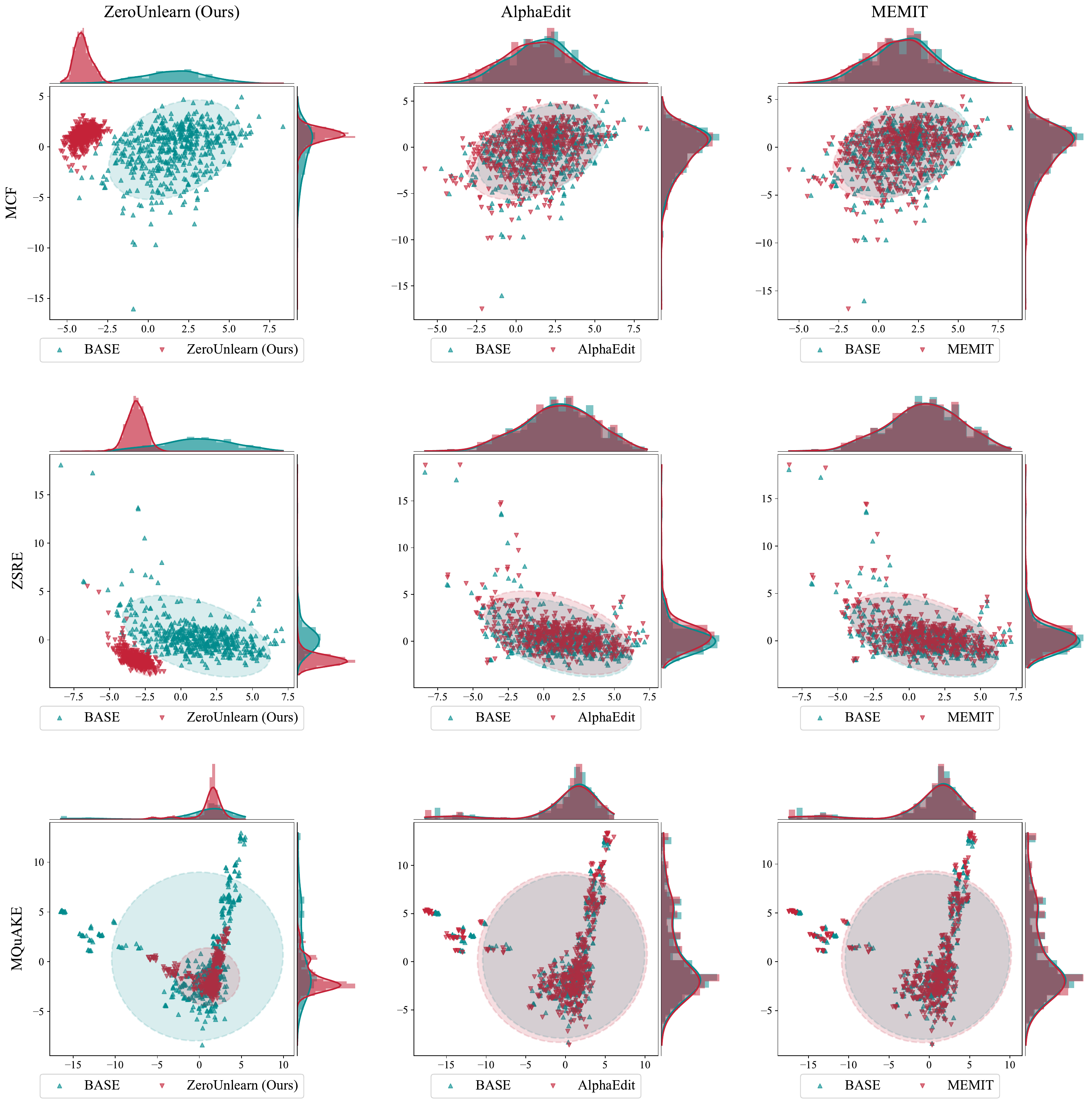}
    \caption{PCA visualization of MLP representation shifts at Layer 9 of Qwen3-4B.}
    \label{appendix:qwen-pca}
\end{figure*}

\clearpage
\section{Additional Experiments}
\label{app:rebuttal_experiments}

This appendix section incorporates the additional experiments conducted during the rebuttal stage. Specifically, we include evaluations on \texttt{RWKU}~\cite{cao2024rwku}, comparisons with NPO variants~\cite{zhang2024negative}, sample-size sensitivity analysis and additional exploration of the neutral target state $\mathbf{M}_n$. All supplementary experiments were conducted on \underline{Llama-3.2}. Since \sysname{} and baselines only updates the down-projection matrices of three FFN layers, we implement two NPO variants for fairness: (i) NPO, which updates the same three layers, and (ii) NPO-full, which updates all parameters.

\subsection{Evaluation on \texttt{RWKU}}
\label{app:rwku}

This section includes additional experiments on the \texttt{RWKU} benchmark. We report unlearning performance across three difficulty levels and compare the proposed method with GA, FT, ROME, MEMIT, AlphaEdit, NPO, and NPO-full.

\begin{table}[!h]
\small
\centering
\caption{Unlearning results on \texttt{RWKU} (Level 1).}
\label{tab:rwku-lv1}
\setlength{\tabcolsep}{7pt}
\renewcommand{\arraystretch}{1.05}

\begin{tabular}{l cc}
\toprule
\textbf{Method} 
& \textbf{Eff.}$\downarrow$ 
& \textbf{PPL}$\downarrow$ \\
\midrule
Base & 39.66{\scriptsize$\pm$0.17} & 12.87 \\
GA & \textbf{1.62}{\scriptsize$\pm$17.1} & 976.00{\scriptsize$\pm$6.3} \\
FT & 41.35{\scriptsize$\pm$0.95} & 13.44{\scriptsize$\pm$0.14} \\
ROME & 38.72{\scriptsize$\pm$2.83} & 12.90{\scriptsize$\pm$0.08} \\
MEMIT & 37.99{\scriptsize$\pm$1.22} & 12.86{\scriptsize$\pm$0.02} \\
AlphaEdit & 37.17{\scriptsize$\pm$2.12} & 12.81{\scriptsize$\pm$0.04} \\
NPO & 39.07{\scriptsize$\pm$0.59} & 12.93{\scriptsize$\pm$0.01} \\
NPO-full & 35.91{\scriptsize$\pm$1.48} & 13.22{\scriptsize$\pm$0.14} \\
\rowcolor{gray!15}\textbf{\sysname{}} & \underline{35.70}{\scriptsize$\pm$3.79} & 13.18{\scriptsize$\pm$0.18} \\
\bottomrule
\end{tabular}
\end{table}

\begin{table}[!h]
\small
\centering
\caption{Unlearning results on \texttt{RWKU} (Level 2).}
\label{tab:rwku-lv2}
\setlength{\tabcolsep}{7pt}
\renewcommand{\arraystretch}{1.05}

\begin{tabular}{l cc}
\toprule
\textbf{Method} 
& \textbf{Eff.}$\downarrow$ 
& \textbf{PPL}$\downarrow$ \\
\midrule
Base & 39.12{\scriptsize$\pm$0.17} & 12.87 \\
GA & \textbf{1.56}{\scriptsize$\pm$2.8} & $>$1000 \\
FT & 38.09{\scriptsize$\pm$1.02} & 12.92{\scriptsize$\pm$0.07} \\
ROME & 38.51{\scriptsize$\pm$1.77} & 12.91{\scriptsize$\pm$0.09} \\
MEMIT & 38.01{\scriptsize$\pm$1.42} & 12.87{\scriptsize$\pm$1.80} \\
AlphaEdit & 35.53{\scriptsize$\pm$2.82} & 12.86{\scriptsize$\pm$0.06} \\
NPO & 38.98{\scriptsize$\pm$0.27} & 12.89{\scriptsize$\pm$0.01} \\
NPO-full & 35.56{\scriptsize$\pm$2.51} & 13.00{\scriptsize$\pm$0.09} \\
\rowcolor{gray!15}\textbf{\sysname{}} & \underline{35.47}{\scriptsize$\pm$2.40} & 13.10{\scriptsize$\pm$0.14} \\
\bottomrule
\end{tabular}
\end{table}

\begin{table}[!h]
\small
\centering
\caption{Unlearning results on \texttt{RWKU} (Level 3).}
\label{tab:rwku-lv3}
\setlength{\tabcolsep}{7pt}
\renewcommand{\arraystretch}{1.05}
\begin{tabular}{l cc}
\toprule
\textbf{Method} 
& \textbf{Eff.}$\downarrow$ 
& \textbf{PPL}$\downarrow$ \\
\midrule
Base & 39.12{\scriptsize$\pm$0.17} & 12.87 \\
GA & \textbf{2.64}{\scriptsize$\pm$4.53} & $>$ 1000 \\
FT & 40.62{\scriptsize$\pm$1.56} & 13.01{\scriptsize$\pm$0.06} \\
ROME & 39.07{\scriptsize$\pm$2.88} & 12.92{\scriptsize$\pm$0.09} \\
MEMIT & 38.93{\scriptsize$\pm$1.99} & 12.86{\scriptsize$\pm$1.30} \\
AlphaEdit & 37.45{\scriptsize$\pm$2.62} & 12.84{\scriptsize$\pm$4.10} \\
NPO & 39.71{\scriptsize$\pm$0.43} & 12.91{\scriptsize$\pm$0.01} \\
NPO-full & 37.66{\scriptsize$\pm$2.36} & 13.05{\scriptsize$\pm$0.08} \\
\rowcolor{gray!15}\textbf{\sysname{}} & \underline{34.01}{\scriptsize$\pm$4.75} & 13.10{\scriptsize$\pm$0.18} \\
\bottomrule
\end{tabular}
\end{table}

\subsection{Performance of NPO on Knowledge Editing Datasets}
\label{app:npo_editing}

This section includes the experiments comparing NPO variants on the knowledge editing datasets. NPO fine-tunes the same three layers as our method, while NPO-full fine-tunes all layers.

\begin{table}[!h]
\small
\centering
\caption{Unlearning results of NPO on the \texttt{MCF} dataset.}
\setlength{\tabcolsep}{7pt}
\renewcommand{\arraystretch}{1.05}
\begin{tabular}{l cccc}
\toprule
\textbf{Method} 
& \textbf{Eff.}$\downarrow$ 
& \textbf{Gen.}$\downarrow$ 
& \textbf{Spe.}$\uparrow$ 
& \textbf{PPL}$\downarrow$ \\
\midrule
NPO 
& 17.00{\scriptsize$\pm$4.40} 
& 18.30{\scriptsize$\pm$4.73} 
& 17.90{\scriptsize$\pm$3.30} 
& 12.92{\scriptsize$\pm$0.12} \\

NPO-full 
& 16.44{\scriptsize$\pm$4.69} 
& 18.22{\scriptsize$\pm$5.88} 
& \textbf{18.40}{\scriptsize$\pm$4.05} 
& 13.03{\scriptsize$\pm$0.11} \\

\rowcolor{gray!15}\textbf{\sysname{}} 
& \textbf{0.40}{\scriptsize$\pm$0.80} 
& \textbf{4.60}{\scriptsize$\pm$2.24} 
& 14.90{\scriptsize$\pm$2.93} 
& 13.06{\scriptsize$\pm$0.18} \\
\bottomrule
\end{tabular}
\end{table}
\begin{table}[!h]
\small
\centering
\caption{Unlearning results of NPO on the \texttt{ZsRE} dataset}
\setlength{\tabcolsep}{7pt}
\renewcommand{\arraystretch}{1.05}
\begin{tabular}{l cccc}
\toprule
\textbf{Method} 
& \textbf{Eff.}$\downarrow$ 
& \textbf{Gen.}$\downarrow$ 
& \textbf{Spe.}$\uparrow$ 
& \textbf{PPL}$\downarrow$ \\
\midrule
NPO 
& 30.77{\scriptsize$\pm$3.72} 
& 30.15{\scriptsize$\pm$4.24} 
& \textbf{28.09}{\scriptsize$\pm$2.47} 
& 12.91{\scriptsize$\pm$0.96} \\

NPO-full 
& 30.44{\scriptsize$\pm$4.09} 
& 29.90{\scriptsize$\pm$3.88} 
& 27.71{\scriptsize$\pm$2.67} 
& 13.03{\scriptsize$\pm$0.07} \\

\rowcolor{gray!15}\textbf{\sysname{}} 
& \textbf{27.85}{\scriptsize$\pm$3.87} 
& \textbf{27.52}{\scriptsize$\pm$3.87} 
& 27.73{\scriptsize$\pm$2.70} 
& 13.08{\scriptsize$\pm$0.06} \\
\bottomrule
\end{tabular}
\end{table}

\begin{table}[!h]
\small
\centering
\caption{Unlearning results of NPO on the \texttt{MQUAKE} dataset}
\label{tab:npo-mquake}
\setlength{\tabcolsep}{7pt}
\renewcommand{\arraystretch}{1.05}
\begin{tabular}{l cc}
\toprule
\textbf{Method} 
& \textbf{Eff.}$\downarrow$ 
& \textbf{PPL}$\downarrow$ \\
\midrule
NPO & 32.33{\scriptsize$\pm$5.23} & 12.9{\scriptsize$\pm$0.01} \\
NPO-full & 30.64{\scriptsize$\pm$4.63} & 12.93{\scriptsize$\pm$0.07} \\
\rowcolor{gray!15}\textbf{\sysname{}} & \textbf{24.51}{\scriptsize$\pm$3.37} & 13.15{\scriptsize$\pm$0.16} \\
\bottomrule
\end{tabular}
\end{table}

\subsection{Sample Size Sensitivity of \sysname{}}
\label{app:sample_size}
 We vary the number of forgotten samples and report the corresponding unlearning and utility metrics on the evaluated datasets.

\begin{table}[!h]
\small
\centering
\caption{Unlearning results on the \texttt{MCF} dataset under different sample sizes. }
\setlength{\tabcolsep}{7pt}
\renewcommand{\arraystretch}{1.05}
\begin{tabular}{c cccc}
\toprule
\textbf{\# Forgotten Samples} 
& \textbf{Eff.}$\downarrow$ 
& \textbf{Gen.}$\downarrow$ 
& \textbf{Spe.}$\uparrow$ 
& \textbf{PPL}$\downarrow$ \\
\midrule
10 
& 1.00{\scriptsize$\pm$3.00} 
& 7.00{\scriptsize$\pm$5.09} 
& 20.60{\scriptsize$\pm$4.86} 
& 13.10{\scriptsize$\pm$0.11} \\

25 
& 0.80{\scriptsize$\pm$1.60} 
& 5.00{\scriptsize$\pm$2.86} 
& 17.52{\scriptsize$\pm$3.71} 
& 13.06{\scriptsize$\pm$0.13} \\

50 
& {0.40}{\scriptsize$\pm$0.80} 
& {4.60}{\scriptsize$\pm$2.24} 
& 14.90{\scriptsize$\pm$2.93} 
& 13.06{\scriptsize$\pm$0.18} \\

100 
& 0.50{\scriptsize$\pm$0.81} 
& 5.15{\scriptsize$\pm$1.78} 
& 14.72{\scriptsize$\pm$2.31} 
& 13.05{\scriptsize$\pm$0.19} \\

200 
& 0.90{\scriptsize$\pm$0.70} 
& 5.90{\scriptsize$\pm$0.77} 
& {13.29}{\scriptsize$\pm$0.86} 
& 13.25{\scriptsize$\pm$0.15} \\

500 
& 3.25{\scriptsize$\pm$0.43} 
& 6.22{\scriptsize$\pm$0.25} 
& 12.53{\scriptsize$\pm$0.78} 
& 13.77{\scriptsize$\pm$0.38} \\

1000 
& 6.73{\scriptsize$\pm$0.42} 
& 8.45{\scriptsize$\pm$0.66} 
& 12.20{\scriptsize$\pm$0.49} 
& 14.25{\scriptsize$\pm$0.41} \\
\bottomrule
\end{tabular}
\end{table}

\begin{table}[!h]
\small
\centering
\caption{Unlearning results on the \texttt{ZsRE} dataset under different sample sizes.}
\setlength{\tabcolsep}{7pt}
\renewcommand{\arraystretch}{1.05}
\begin{tabular}{c cccc}
\toprule
\textbf{\# Forgotten Samples} 
& \textbf{Eff.}$\downarrow$ 
& \textbf{Gen.}$\downarrow$ 
& \textbf{Spe.}$\uparrow$ 
& \textbf{PPL}$\downarrow$ \\
\midrule
10 
& 24.82{\scriptsize$\pm$13.60} 
& 25.67{\scriptsize$\pm$13.69} 
& 28.51{\scriptsize$\pm$7.47} 
& 13.07{\scriptsize$\pm$0.07} \\

25 
& 24.97{\scriptsize$\pm$5.39} 
& 25.60{\scriptsize$\pm$5.92} 
& 27.25{\scriptsize$\pm$4.79} 
& 13.13{\scriptsize$\pm$0.84} \\

50 
& 27.85{\scriptsize$\pm$3.87} 
& 27.52{\scriptsize$\pm$3.87} 
& 27.73{\scriptsize$\pm$2.70} 
& 13.08{\scriptsize$\pm$0.06} \\

100 
& 26.74{\scriptsize$\pm$2.13} 
& 27.17{\scriptsize$\pm$1.77} 
& 27.78{\scriptsize$\pm$2.73} 
& 13.15{\scriptsize$\pm$0.10} \\

200 
& 26.53{\scriptsize$\pm$1.82} 
& 26.54{\scriptsize$\pm$2.01} 
& 26.98{\scriptsize$\pm$1.73} 
& 13.18{\scriptsize$\pm$0.12} \\

500 
& 24.23{\scriptsize$\pm$0.11} 
& 23.76{\scriptsize$\pm$0.28} 
& 24.39{\scriptsize$\pm$1.01} 
& 13.69{\scriptsize$\pm$0.35} \\

1000 
& 20.67{\scriptsize$\pm$0.34} 
& 19.85{\scriptsize$\pm$0.25} 
& 20.48{\scriptsize$\pm$0.44} 
& 14.50{\scriptsize$\pm$0.55} \\
\bottomrule
\end{tabular}
\end{table}

\begin{table}[!h]
\small
\centering
\caption{Unlearning results on the \texttt{ZsRE} dataset under different sample sizes.}
\setlength{\tabcolsep}{7pt}
\renewcommand{\arraystretch}{1.05}
\begin{tabular}{c cc}
\toprule
\textbf{\# Forgotten Samples} 
& \textbf{Eff.}$\downarrow$ 
& \textbf{PPL}$\downarrow$ \\
\midrule
10 
& 24.98{\scriptsize$\pm$9.91} 
& 13.05{\scriptsize$\pm$0.06} \\

25 
& 24.67{\scriptsize$\pm$6.72} 
& 13.03{\scriptsize$\pm$0.09} \\

50 
& 24.51{\scriptsize$\pm$3.37} 
& 13.15{\scriptsize$\pm$0.16} \\

100 
& 23.01{\scriptsize$\pm$2.53} 
& 13.18{\scriptsize$\pm$0.20} \\

200 
& 21.75{\scriptsize$\pm$2.28} 
& 13.27{\scriptsize$\pm$0.27} \\

500 
& 21.56{\scriptsize$\pm$1.66} 
& 13.31{\scriptsize$\pm$0.19} \\

1000 
& 19.28{\scriptsize$\pm$1.34} 
& 13.61{\scriptsize$\pm$0.22} \\
\bottomrule
\end{tabular}
\end{table}

\vspace{30mm}
\subsection{Further Exploration of Neutral Target $\mathbf{M}_n$}
\label{app:neutral_target}

This section includes additional experiments on the choice of the neutral target state $\mathbf{M}_n$. We evaluate several target states, including \texttt{<EOS>}, ``I don't know.'', and ``Hello.''.
\begin{table}[!h]
\small
\centering
\caption{Results on the \texttt{MCF} dataset under different $\mathbf{M}_n$}
\setlength{\tabcolsep}{7pt}
\renewcommand{\arraystretch}{1.05}
\begin{tabular}{l cccc}
\toprule
\textbf{$\mathbf{M}_n$} 
& \textbf{Eff.}$\downarrow$ 
& \textbf{Gen.}$\downarrow$ 
& \textbf{Spe.}$\uparrow$ 
& \textbf{PPL}$\downarrow$ \\
\midrule
``\texttt{<EOS>}" 
& 0.40{\scriptsize$\pm$0.80} 
& 4.60{\scriptsize$\pm$2.24} 
& 14.90{\scriptsize$\pm$2.93} 
& 13.06{\scriptsize$\pm$0.18} \\
``I don't know.''  
& 1.00{\scriptsize$\pm$1.34} 
& 5.70{\scriptsize$\pm$1.67} 
& 16.38{\scriptsize$\pm$3.50} 
& 13.15{\scriptsize$\pm$0.96}  \\
``Hello.''  
& 0.20{\scriptsize$\pm$0.60} 
& 4.80{\scriptsize$\pm$2.36} 
& 14.90{\scriptsize$\pm$2.70} 
& 13.21{\scriptsize$\pm$0.12}  \\
\bottomrule
\end{tabular}
\end{table}
\vspace{-3mm}

\begin{table}[!h]
\small
\centering
\caption{Results on the \texttt{ZsRE} dataset under different $\mathbf{M}_n$}
\setlength{\tabcolsep}{7pt}
\renewcommand{\arraystretch}{1.05}
\begin{tabular}{l cccc}
\toprule
\textbf{$\mathbf{M}_n$} 
& \textbf{Eff.}$\downarrow$ 
& \textbf{Gen.}$\downarrow$ 
& \textbf{Spe.}$\uparrow$ 
& \textbf{PPL}$\downarrow$ \\
\midrule
``\texttt{<EOS>}" 
& 27.85{\scriptsize$\pm$3.87} 
& 27.52{\scriptsize$\pm$3.87} 
& 27.73{\scriptsize$\pm$2.70} 
& 13.08{\scriptsize$\pm$0.06} \\
``I don't know.''  
& 27.01{\scriptsize$\pm$2.91} 
& 26.81{\scriptsize$\pm$3.44} 
& 27.24{\scriptsize$\pm$2.70} 
& 13.06{\scriptsize$\pm$0.12}  \\
``Hello.''  
& 27.14{\scriptsize$\pm$2.75} 
& 27.50{\scriptsize$\pm$3.12} 
& 26.99{\scriptsize$\pm$2.34} 
& 13.09{\scriptsize$\pm$0.15}  \\
\bottomrule
\end{tabular}
\end{table}

\begin{table}[!h]
\small
\centering
\caption{Results on the \texttt{MQUAKE} dataset under different $\mathbf{M}_n$}
\setlength{\tabcolsep}{7pt}
\renewcommand{\arraystretch}{1.05}
\begin{tabular}{l cc}
\toprule
\textbf{$\mathbf{M}_n$} 
& \textbf{Eff.}$\downarrow$ 
& \textbf{PPL}$\downarrow$ \\
\midrule
``\texttt{<EOS>}" 
& 23.60{\scriptsize$\pm$3.07} 
& 13.11{\scriptsize$\pm$0.17} \\
``I don't know.''  
& 22.66{\scriptsize$\pm$3.04} 
& 13.14{\scriptsize$\pm$0.10}  \\
``Hello.''  
& 24.51{\scriptsize$\pm$3.37} 
& 13.15{\scriptsize$\pm$0.16}  \\
\bottomrule
\end{tabular}
\end{table}